\documentclass{article}
\PassOptionsToPackage{numbers, compress}{natbib}

\usepackage[main,final]{neurips_2026}

\usepackage[utf8]{inputenc} 
\usepackage[T1]{fontenc}    
\usepackage{hyperref}       
\usepackage{url}            
\usepackage{booktabs}       
\usepackage{amsfonts}       
\usepackage{nicefrac}       
\usepackage{microtype}      
\usepackage{xcolor}         

\usepackage{microtype}
\usepackage{graphicx}
\usepackage{subcaption}
\usepackage{booktabs} 

\usepackage{hyperref}

\usepackage{amsmath}
\usepackage{amssymb}
\usepackage{mathtools}
\usepackage{amsthm}
\usepackage{bm}
\usepackage{algorithm}
\usepackage{algorithmic}
\usepackage{graphicx}
\usepackage{wrapfig}
\usepackage[capitalize,noabbrev]{cleveref}

\theoremstyle{plain}
\newtheorem{theorem}{Theorem}[section]
\newtheorem{proposition}[theorem]{Proposition}

\theoremstyle{definition}
\newtheorem{definition}[theorem]{Definition}

\theoremstyle{remark}

\usepackage[textsize=tiny]{todonotes}

\usepackage{hyperref}
\usepackage{url}

\usepackage[utf8]{inputenc} 
\usepackage[T1]{fontenc}    
\usepackage{hyperref}       
\usepackage{url}            
\usepackage{booktabs}       
\usepackage{amsfonts}       
\usepackage{nicefrac}       
\usepackage{microtype}      
\usepackage{xcolor}         

\newcommand{\ie}[1]{\textit{i.e., }{#1}}
\newcommand{\eg}[1]{\textit{e.g., }{#1}}

\usepackage{booktabs}
\usepackage{multirow}
\usepackage{caption}

\title{HyFAD: Hybrid Time-Frequency Diffusion with Frequency-Aware Embedding for Time Series Imputation}

\author{Hongfan Gao, Wangmeng Shen, Bin Yang, Jilin Hu\thanks{Corresponding Author}\\
School of Data Science and Engineering\\
East China Normal University\\
\texttt{\{hf.gao,wmshen\}@stu.ecnu.edu.cn}\\
\texttt{\{byang,jlhu\}@dase.ecnu.edu.cn}
}

\begin{document}

\maketitle

\begin{abstract}
Diffusion models have demonstrated strong performance in time series modeling due to their ability to progressively capture complex data distributions through iterative denoising. However, existing approaches struggle with frequency-sensitive denoising, high-frequency reconstruction and balancing global trends with local dynamics. To address these limitations, we propose \textbf{HyFAD}, a \textbf{Hy}brid time-frequency \textbf{D}iffusion model with \textbf{F}requency-\textbf{A}ware embedding for time series imputation. Built upon the DDPM paradigm, HyFAD adopts a coupled time-frequency diffusion framework, in which the reverse denoising proceeds sequentially from the time domain to the frequency domain, enabling coarse-to-fine generation. Specifically, the time-domain diffusion process captures low-frequency global trends, while the frequency-domain diffusion process refines high-frequency spectral components. We further introduce a frequency-aware step embedding that exploits the relationship between diffusion steps and spectral components, providing step-dependent spectral guidance and facilitates more accurate band-wise reconstruction. Extensive experiments on multiple benchmark datasets demonstrate that HyFAD achieves state-of-the-art performance. Our source code is available at \url{https://github.com/hongfangao/HyFAD}.
\end{abstract}

\section{Introduction}

Diffusion models~\citep{DBLP:conf/nips/HoJA20,DBLP:conf/iclr/song21score,DBLP:conf/icml/Sohl-DicksteinW15} have emerged as a powerful class of generative frameworks for time series modeling, owing to their strong ability to approximate complex data distributions. By progressively denoising latent representations initialized from Gaussian noise, these models learn a reverse process that captures rich temporal dependencies. As a result, diffusion-based approaches have achieved state-of-the-art performance across a range of time series tasks, including imputation~\citep{DBLP:conf/nips/TashiroSSE21,DBLP:journals/tmlr/AlcarazS23}, forecasting~\citep{DBLP:conf/icml/RasulSSV21,DBLP:conf/nips/KolloviehABZ0023}, and generation~\citep{DBLP:conf/nips/ColettaGBV23,DBLP:conf/iclr/YuanQ24}.

Despite their success, existing diffusion models for time series still suffer from several notable limitations:
(i)~\textbf{Insufficient reconstruction of high-frequency details}~\citep{DBLP:conf/nips/GalibTL24,DBLP:conf/nips/YangSYC24,DBLP:journals/corr/abs-2505-11278}. In standard time-domain diffusion, each diffusion step injects isotropic Gaussian noise whose energy is uniformly distributed across frequencies. However, real-world time series usually exhibit highly imbalanced spectral energy distributions, with most energy concentrated in low-frequency components. This mismatch makes high-frequency reconstruction more challenging and limits the model's ability to recover fine-grained temporal details. 
(ii)~\textbf{Imbalanced modeling of global trends and local dynamics}~\citep{DBLP:conf/icml/CrabbeHSS24,DBLP:conf/iclr/ShenCK24,DBLP:conf/nips/NaimanBPAFA24}. Global trends and local dynamics often compete during reconstruction. Prioritizing long-term structures may lead to over-smoothed outputs, while emphasizing short-term fluctuations can disrupt global consistency. 
(iii)~\textbf{Frequency-insensitive modeling}~\citep{DBLP:conf/icml/WangYW0025,DBLP:conf/cvpr/YangZFW23}. Existing diffusion models typically use frequency-agnostic step embeddings and inject noise uniformly across frequency bands. As a consequence, they have limited ability to adapt their denoising behavior to the distinct recovery characteristics of low- and high-frequency components, which can lead to over-smoothed reconstructions when local variations are weak or sparsely observed.

Fig.~\ref{fig:intro} provides a representative imputation example on PhysioNet. The time-domain diffusion baseline, CSDI, captures the coarse level of the sequence but attenuates the sharp local variation in the highlighted region, resulting in an over-smoothed reconstruction. The spectrum computed over the same window further shows that the reconstructed signal underestimates middle- and high-frequency amplitudes compared with the ground truth. This example indicates that ignoring frequency dynamics can lead to observable reconstruction errors, motivating diffusion models that explicitly account for frequency-domain denoising.

\begin{figure}
    \centering
    \includegraphics[width=0.85\linewidth]{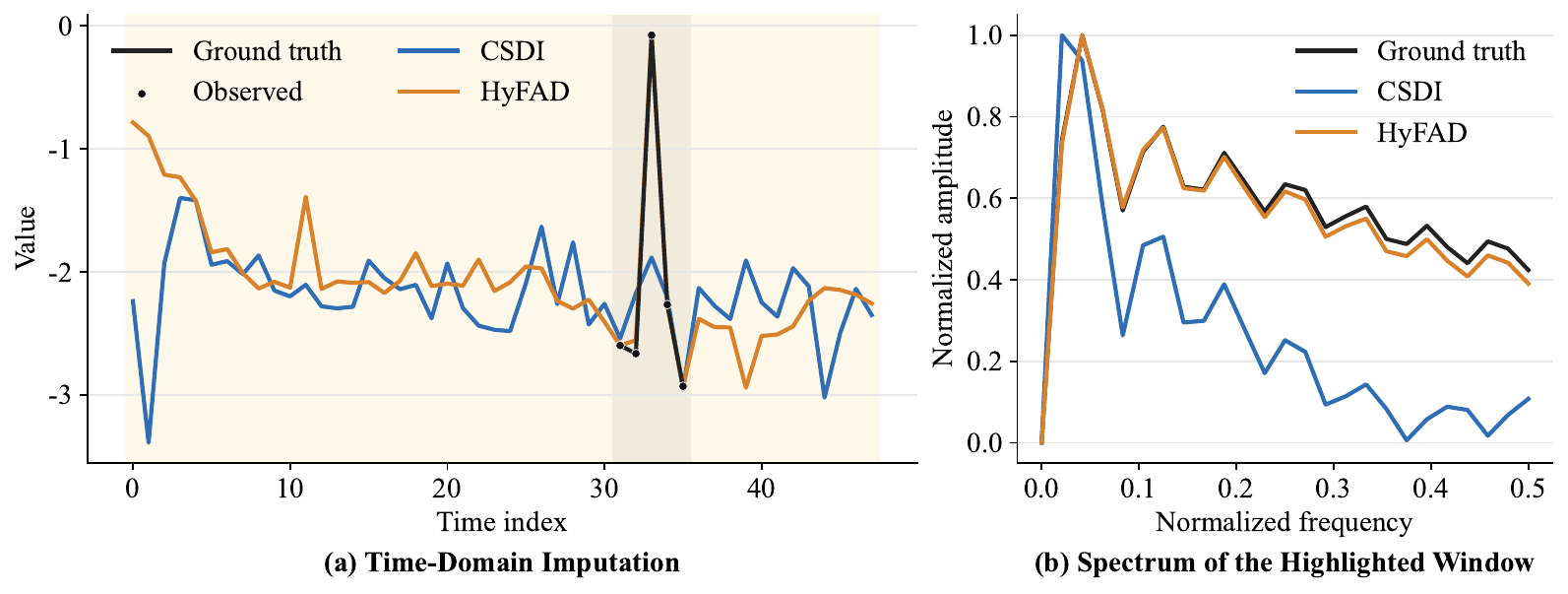}
    \caption{An imputation example from Channel 4 on PhysioNet. While CSDI captures coarse trends, it attenuates sharp local variations in the highlighted region. In contrast, HyFAD better preserves these local structures and matches the ground-truth spectrum, correcting CSDI's underestimation of mid/high-frequency amplitudes.}
    \label{fig:intro}
    \vspace{-7mm}
\end{figure}

In summary, these limitations can be attributed to two fundamental causes. 
(i)~\textbf{Time-domain-only noise modeling.} In most existing diffusion models, both noise injection and denoising are performed exclusively in the time domain. This design overlooks the imbalanced spectral energy distribution of real-world time series, where high-frequency components often have weaker energy and are therefore harder to recover. Incorporating frequency-domain modeling enables explicit characterization of band-wise spectral statistics, allowing high-frequency transients and local dynamics to be more effectively captured during denoising. 
(ii)~\textbf{Lack of frequency-specific attention in step embeddings.} Existing diffusion step embeddings do not incorporate frequency-dependent information and therefore fail to capture how different frequency bands should be emphasized or suppressed at different denoising stages. This prevents the model from adaptively allocating capacity between low- and high-frequency components, exacerbating the loss of fine-grained details and weakening multi-scale representation.

To address these challenges, we propose \textbf{HyFAD}, a \textbf{Hy}brid time-frequency \textbf{D}iffusion model with \textbf{F}requency-\textbf{A}ware embedding for time series imputation. Unlike conventional time-domain diffusion approaches, HyFAD incorporates frequency-domain dynamics into the diffusion process, enabling explicit differentiation across spectral bands and improving high-frequency reconstruction. Specifically, HyFAD formulates a coupled time-frequency diffusion process, where frequency-domain and time-domain noise are injected sequentially in the forward process and removed in a mirrored order during reverse denoising. Furthermore, we analyze the relationship between diffusion steps and spectral components and introduce a frequency-aware step embedding that provides step-dependent spectral guidance, allowing the model to focus on appropriate frequency bands throughout denoising.

Our contributions are summarized as follows:
\begin{enumerate}
    \item We propose HyFAD, a coupled time-frequency diffusion framework for time series imputation. The proposed forward and reverse processes explicitly model both time-domain and frequency-domain noise within a unified DDPM framework, enabling coarse-to-fine reconstruction.
    \item We introduce a frequency-aware diffusion step embedding that provides step-dependent spectral guidance by modeling the correspondence between diffusion steps and frequency bands, allowing adaptive focus on appropriate spectral components during denoising.
    \item We conduct extensive experiments on multiple real-world benchmark datasets, where HyFAD consistently achieves state-of-the-art performance, demonstrating the effectiveness of the proposed method.
\end{enumerate}

\section{Preliminaries}

\subsection{Discrete Fourier Transform}
\label{sec:dft}
Given a real-valued discrete time series $\mathbf{x} = [x_0, \ldots, x_{L-1}] \in \mathbb{R}^L$ with length $L$, its unitary Discrete Fourier Transform (DFT), denoted by $\tilde{\mathbf{x}} \in \mathbb{C}^L$, is defined as:
\begin{equation}
    \tilde{\mathbf{x}}_k = \frac{1}{\sqrt{L}}\sum_{\tau=0}^{L-1}\mathbf{x}_{\tau}\exp\left(-
    \frac{k2\pi i}{L}\tau\right),
\end{equation}
where $k$ is the frequency index, $\tau$ is the time index, and $i$ denotes the imaginary unit. As $\mathbf{x}$ is real-valued, $\tilde{\mathbf{x}}$ exhibits conjugate symmetry \ie{$\tilde{\mathbf{x}}_k = \tilde{\mathbf{x}}_{L-k}^*$}, making almost half of the coefficients redundant.
To obtain a compact representation that preserves isometry (\ie satisfying Parseval's identity) in the real domain, we define an \emph{orthonormal real DFT}~(rDFT) transform.

Specifically, we exploit the spectral symmetry to discard the redundant negative frequencies. To offset the energy reduction caused by this truncation, we apply a scaling factor of $\sqrt{2}$ to the real and imaginary components of the intermediate frequencies. The elements of the transformed vector $\mathbf{y}$ are arranged as follows: 
\begin{equation}
    \mathbf{y}_m = \begin{cases}
        \Re(\tilde{\mathbf{x}}_0)&m=0\\
        \Re(\tilde{\mathbf{x}}_m)&m=\frac{L}{2}~(\text{L is even})\\
        \sqrt{2}\Re(\tilde{\mathbf{x}}_m)&0<m<\lceil\frac{L}{2}\rceil\\
        \sqrt{2}\Im(\tilde{\mathbf{x}}_{k(m)})&m\ge N_r,
    \end{cases}
\end{equation}
where $\Re(\tilde{\mathbf{x}})$ and $\Im(\tilde{\mathbf{x}})$ denote the real and imaginary parts of $\mathbf{x}$ and
$\mathbf{y} = [y_0,\ldots,y_{L-1}]^\top \in \mathbb{R}^L$ is the real-valued frequency representation constructed from $\Re(\tilde{\mathbf{x}})$ and $\Im(\tilde{\mathbf{x}})$.
The index $m=0,\ldots,L-1$ denotes the entry index of $\mathbf{y}$, $k(m)=m-N_r+1$ maps entries in the imaginary-part block of $\mathbf{y}$ to the corresponding positive-frequency index and $N_r = \lfloor\frac{L}{2}\rfloor$ is the number of frequency bins in the complex spectrum.

It is worth noticing that rDFT is a linear transformation, \ie{$\mathbf{y} = \mathbf{W}\mathbf{x}$}, where $\mathbf{W} \in \mathbb{R}^{L \times L}$ is the transformation matrix and orthogonal~(\ie{$\mathbf{W}\mathbf{W}^T = \mathbf{I}$}). This implies that the rDFT operation is invertible, allowing for reconstructing the time-domain representation $\mathbf{x}$ from the frequency-domain representation $\mathbf{y}$ with $\mathbf{x}=\mathbf{W}^T\mathbf{y}$.

\subsection{Time Series Diffusion in the Frequency Domain}
\label{sec:tsdf}
While most existing diffusion-based approaches~\citep{DBLP:conf/nips/WangZQ0W0L23,DBLP:journals/corr/abs-2106-10121} 
for time series focus exclusively on modeling in the time domain, \citep{DBLP:conf/icml/CrabbeHSS24} explores 
the diffusion process in the frequency domain by introducing a noise scaling matrix 
$\Lambda$ to the frequency-domain diffusion process under DFT. $\Lambda$ plays to essential roles: (i) Rescaling spectral coefficients to make frequency-domain noise isotropic across all frequency components, (ii) Ensuring energy consistency between the time and frequency domains under the frequency domain transform. 
For an input sequence $\mathbf{x}\in\mathbb{R}^{K\times L}$, the diagonal noise scaling matrix 
$\Lambda\in\mathbb{R}^{L\times L}$ is defined as :
\begin{equation}
    \Lambda=\begin{cases}
    1& n=1 \text{ or $n=\frac{L}{2}$}~(L~\text{is even}) \\
    \frac{1}{\sqrt{2}}&\text{otherwise}
    \end{cases}
    \label{eq:g}
\end{equation}
Different from \citep{DBLP:conf/icml/CrabbeHSS24}, we set $\Lambda=\mathbf{I}$ since rDFT in Sec.\ref{sec:dft} is employed in our implementations instead of DFT for isotropic noise and energy consistency.
\subsection{Problem Definition and Notations}
\begin{definition}[\textbf{Time Series with Missing Value}]
\label{def:incts}
A time series with missing values is defined as $\tilde{\mathbf{X}}=(\mathbf{X}, \mathbf{M}, \mathbf{T})$, where $\mathbf{X} \in \mathbb{R}^{K \times L}$ is the observation matrix with $K$ observations at a time, which are ordered along $L$ time intervals chronologically; $\mathbf{M}\in \mathbb{R}^{K \times L}$ is an indicator matrix that indicates whether the observation at $(i,j)$ in $\mathbf{X}$ is missing or not. We use $M^{\mathrm{obs}} \in \{0,1\}^{K \times L}$ to denote the observation mask, where $M^{\mathrm{obs}}_{i,j}=1$ indicates that $X_{i,j}$ is observed. We use $M^{\mathrm{tar}}$ to denote the target mask for evaluation or training, where $M^{\mathrm{tar}}_{i,j}=1$ indicates an imputation target.
 
\end{definition}
\textbf{Problem Statement}~\textbf{(Time Series Imputation).}
Given a time series with missing value $\tilde{\mathbf{X}}=(\mathbf{X}, \mathbf{M}, \mathbf{T})$, the goal of time series imputation is to learn an imputation function $\mathcal{M}_{\theta}$, such that 
\begin{equation}
    \bar{\mathbf{X}} = \mathcal{M}_{\theta}(\tilde{\mathbf{X}}),
\end{equation}
where $\bar{\mathbf{X}}\in \mathbb{R}^{K \times L}$ is the imputed time series, $\bar{\mathbf{X}}_{i,j}$ denotes the imputation output if $\mathbf{M}_{i,j} = 1 $, otherwise $\bar{\mathbf{X}}_{i,j} = \tilde{{\mathbf{X}}}_{i,j}$. 
\label{sec:nota}

\textbf{Notations.}~We adopt the following notations. We use superscripts to denote the domain of the data, and subscripts to denote denoising steps, \eg{$\mathbf{x}_k^f$ refers to the data at diffusion step $k$ in the frequency domain and $\boldsymbol{\epsilon}_k^t$ refers to the noise at diffusion step $k$ in the time domain}. $\beta$ is a predefined scheduler over $T$ denoising steps and $\alpha_{k} = 1-\beta_k,k\in\{1,2,\cdots,T\}$. We denote $\bar{\alpha}_{i:j}=\Pi_{k=i}^j\alpha_k$ and if $i=1$, we omit the lower index $i$ and use $\bar{\alpha}_j$ for short. Specially, we define $\bar{\alpha}_{0:j}=1$ for arbitrary $j$. Besides, for time series data, we following the same notations as CSDI, \ie{$\mathbf{X}^{\text{ta}},\mathbf{X}^{\text{obs}}$ denotes the imputation targets and ground truth values and $\mathbf{M}^{\text{cond}}$ denotes the condition matrix.}

\section{Methodology}
\begin{figure*}[tbp]
    \centering
    \includegraphics[width=0.8\linewidth]{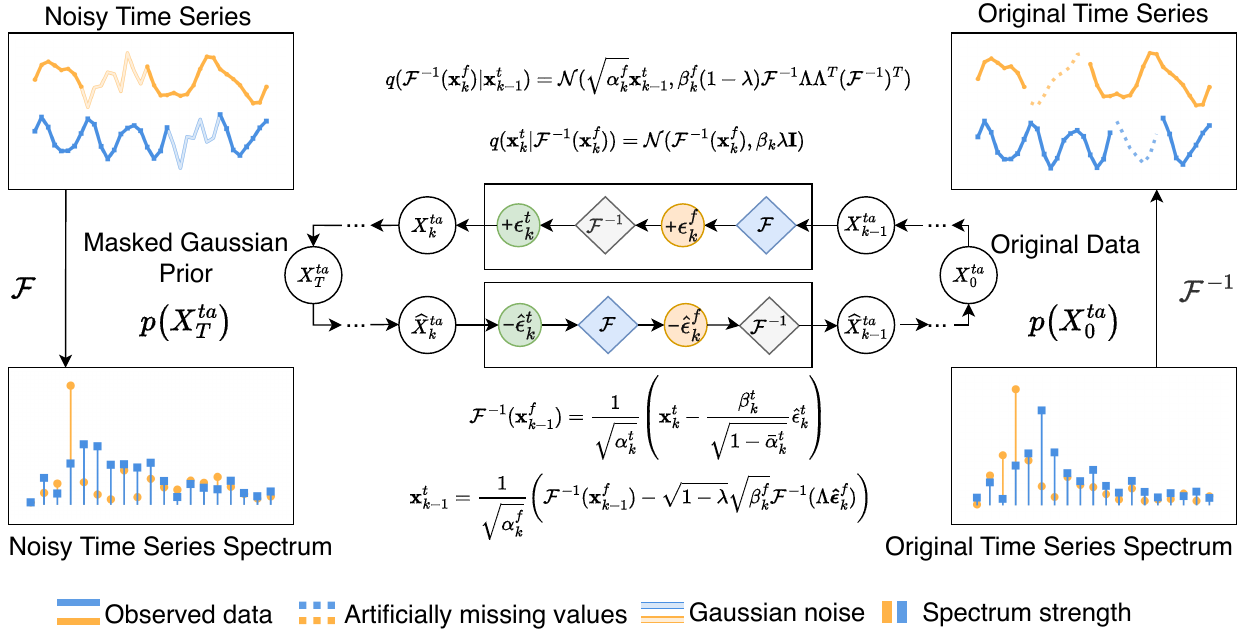}
    \caption{Noise injection and denoising process in HyFAD. At each forward step, frequency-domain noise is injected first and then time-domain noise. In the reverse process, time-domain noise is removed first, followed by frequency-domain noise.}
    \label{fig:forrev}
    \vspace{-5mm}
\end{figure*}
\subsection{Hybrid Time-Frequency Diffusion Model}
\subsubsection{Forward Process of HyFAD}
\label{sec:forward}
The forward process of HyFAD consists of two successive parts in a single  step,
\ie{noise is first injected in the frequency domain and subsequently in the time domain.}
By iteratively performing these two successive noise injection procedures, the original data distribution $p(x_0)$ is transformed into a target Gaussian distribution. Given frequency transform $\mathcal{F}$, we first map the data into corresponding domain via $\mathcal{F}$, and then inject random gaussian noise $\boldsymbol{\epsilon}_k^f\sim\mathcal{N}(0,\mathbf{I})$ into this frequency domain representation following a DDPM-style forward diffusion process, which is formulated as:
\begin{equation}
    \mathbf{x}_k^f = \sqrt{\alpha_k^f}\mathcal{F}(\mathbf{x}_{k-1}^t) + \sqrt{\beta_k^f}\sqrt{1-\lambda}(\Lambda\boldsymbol{\epsilon}_k^f),
    \label{eq:addnoisef1}
\end{equation}
where $\lambda$ is the coefficient for balancing variance between frequency 
and time domain noise and $\boldsymbol{\epsilon}_k^f\sim\mathcal{N}(0,\mathbf{I})$.
Then the frequency-domain noisy sample $\mathbf{x}_{k}^f$ in Eq.\ref{eq:addnoisef1} is transformed back to time domain:
\begin{equation}
    \mathcal{F}^{-1}(\mathbf{x}_k^f) = \sqrt{\alpha_k^f}\mathbf{x}_{k-1}^t + \sqrt{\beta_k^f}\sqrt{1-\lambda}\mathcal{F}^{-1}(\Lambda\boldsymbol{\epsilon}_k^f),
    \label{eq:addnoisef2}
\end{equation}
where $\boldsymbol{\epsilon}_k^f\sim\mathcal{N}(0,\mathbf{I})$ and we can obtain the posterior
$
    q(\mathcal{F}^{-1}(\mathbf{x}_{k}^f)\vert \mathbf{x}_{k-1}^t)\coloneqq\mathcal{N}(\sqrt{\alpha_k^f}\mathbf{x}_{k-1}^t,\beta_k^f(1-\lambda)\mathcal{F}^{-1}\Lambda\Lambda^T(\mathcal{F}^{-1})^{T})
$.

Then another \textbf{independent} \textbf{gaussian} \textbf{noise} $\boldsymbol{\epsilon}_k^t\sim\mathcal{N}(0,\mathbf{I})$ is injected to $\mathcal{F}^{-1}(\mathbf{x}_k^f)$, \ie{noise is injected in the time domain}:
\begin{equation}
    \mathbf{x}_k^t = \sqrt{\alpha_k^t}\mathcal{F}^{-1}(\mathbf{x}_k^f) + \sqrt{\beta_k^t}\sqrt{\lambda}\boldsymbol{\epsilon}_k^t.
    \label{eq:addnoiset}
\end{equation}
Thus, the posterior can be written as $q(\mathbf{x}_k^t\vert \mathcal{F}^{-1}(\mathbf{x}_{k}^f))\coloneqq\mathcal{N}(\sqrt{\alpha_k^t}\mathcal{F}^{-1}(\mathbf{x}_{k}^f),\beta_k^t\lambda\mathbf{I})$.
With this, the noise injection loop $k$ is completed. We can formulate the relationship between $\mathbf{x}_{k}^t$ and $\mathbf{x}_{k-1}^t$ as:
\begin{equation}
    \mathbf{x}_k^t = \sqrt{\alpha_k^t\alpha_k^f}\mathbf{x}_{k-1}^t + \sqrt{\alpha_k^t(1-\alpha_k^f)}\sqrt{1-\lambda}\mathcal{F}^{-1}(\Lambda\boldsymbol{\epsilon}_k^f)+\sqrt{1-\alpha_k^t}\sqrt{\lambda}\boldsymbol{\epsilon}_k^t.
    \label{eq:addnoiseall}
\end{equation}
By iteratively preforming the noise injection process according to Eq.\ref{eq:addnoisef2}, Eq.\ref{eq:addnoiset} and Eq.\ref{eq:addnoiseall}, the closed-form relationship between $\mathbf{x}_k^t$ and $\mathbf{x}_0^t$ can be derived as:
\begin{equation}
    \mathbf{x}_k^t = \sqrt{\bar{\alpha}_k^t\bar{\alpha}_k^f}\mathbf{x}_0^t + \sqrt{1-\lambda}\sum_{s=1}^k\sqrt{\beta_{s}^f}\sqrt{\frac{\bar{\alpha}_k^t\bar{\alpha}_k^f}{\bar{\alpha}_{s-1}^t\bar{\alpha}_s^f}}\mathcal{F}^{-1}(\Lambda\boldsymbol{\epsilon}_s^f) + \sqrt{\lambda}\sum_{s=1}^k\sqrt{\beta_s^t}\sqrt{\frac{\bar{\alpha}_k^t}{\bar{\alpha}_s^t}}\boldsymbol{\epsilon}_s^t.
    \label{eq:addnoisetotal}
\end{equation}
According to Eq.\ref{eq:addnoisetotal}, for any diffusion step $k$, $\mathbf{x}_k^t$ can be expressed as a linear transformation of the original signal $\mathbf{x}_0^t$ with two groups of independent gaussian noise $\boldsymbol{\epsilon}_s^t$ and $\boldsymbol{\epsilon}_s^f$, $s\in\{1,2,\cdots,k\}$. Therefore, the posterior $q(\mathbf{x}_k^t\vert\mathbf{x}_0^t)$ remains gaussian, \ie{}
\begin{equation}
    q(\mathbf{x}_k^t\vert\mathbf{x}_0^t) \coloneqq \mathcal{N}(\boldsymbol{\mu}_k,\boldsymbol{\Sigma}_k),
    \label{eq:posterior}
\end{equation}
where 
$\boldsymbol{\mu}_k = \sqrt{\bar{\alpha}_k^t\bar{\alpha}_k^f}\mathbf{x}_0^t
    \label{eq:postmean}$
and 
$
\boldsymbol{\Sigma}_k = (1-\lambda)\sum_{s=1}^k\beta_s^f\frac{\bar{\alpha}_k^t}{\bar{\alpha}_{s-1}^t}\frac{\bar{\alpha}_k^f}{\bar{\alpha}_s^f}\left(\mathcal{F}^{-1}\Lambda\Lambda^T(\mathcal{F}^{-1})^T\right) +\lambda\sum_{s=1}^k\beta_s^t\frac{\bar{\alpha}_k^t}{\bar{\alpha}_s^t}\mathbf{I}.
\label{eq:postvariance}
$
\subsubsection{Reverse Process of HyFAD}
\label{sec:reverse}
To progressively reconstruct the original data from the diffused samples, 
we construct a reverse Markov chain that is consistent with the forward noise injection process. Following the DDPM formulation, we parameterize the reverse distribution $p_{\theta}(\mathbf{x}_{k-1}^t\vert\mathbf{x}_{k}^t)$ using a Gaussian approximation, aiming to model the true posterior $q(\mathbf{x}_{k-1}^t\vert \mathbf{x}_{k}^t,\mathbf{x}_0^t)$. This enables closed-form stepwise sampling throughout the denoising trajectory. Since our method introduces a combination of time-domain noise and frequency-domain noise~(Eq.\ref{eq:addnoiseall}) at each diffusion step, the reverse process must explicitly account for the structure of this composite noise to ensure faithful reconstruction of original data.

To improve sampling stability, we adopt a DDIM-style sampling strategy~\citep{DBLP:conf/iclr/SongME21}, which follows a deterministic trajectory in the reverse process. We initialize from a Gaussian prior that is consistent with the terminal distribution of the forward diffusion. As in Eq.\ref{eq:posterior}, the sampling process starts from 
\begin{equation}
\mathbf{x}_{T}^t=\sqrt{\lambda}\boldsymbol{\epsilon}^t+\sqrt{1-\lambda}\mathcal{F}^{-1}(\Lambda\boldsymbol{\epsilon}^f),\boldsymbol{\epsilon}^t,\boldsymbol{\epsilon}^f\sim\mathcal{N}(0,\mathbf{I}). 
\end{equation}
At each step $k$, the model predicts the time domain noise $\hat{\boldsymbol{\epsilon}}_{k}^t$ and
frequency domain noise $\hat{\boldsymbol{\epsilon}}_{k}^f$. Reverse denoising is then performed in a step-wise, coupled manner within the same diffusion loop. First, the time-domain noise in $\mathbf{x}_{k}^t$ is removed according to:
\begin{equation}
    \mathcal{F}^{-1}(\mathbf{x}_{k-1}^f) = \frac{1}{\sqrt{\alpha_{k}^t}}\left(\mathbf{x}_{k}^t-\frac{\beta_{k}^t}{\sqrt{1-\bar{\alpha}_{k}^t}}\hat{\boldsymbol{\epsilon}}_{k}^t\right). 
    \label{eq:timedenoising}
\end{equation}
Then the frequency-domain noise in $\mathcal{F}^{-1}(\mathbf{x}_{k-1}^f)$ is removed for $\mathbf{x}_{k-1}^t$:
\begin{equation}
    \mathbf{x}_{k-1}^t = \frac{1}{\sqrt{\alpha_{k}^f}}\left(\mathcal{F}^{-1}(\mathbf{x}_{k-1}^f)-\sqrt{1-\lambda}\sqrt{\beta_{k}^f}\mathcal{F}^{-1}(\Lambda\boldsymbol{\hat{\epsilon}}_k^f)\right).
    \label{eq:freqdenosing}
\end{equation}

Eq.\ref{eq:timedenoising} and Eq.\ref{eq:freqdenosing} define distinct update rules for time and frequency domain denoising process.
To summarize, the step-wise denoising strategy \ie{time-to-frequency denoising}, 
serves as the mirror counterpart of the forward diffusion process (frequency-to-time noise injection). 
Empirically, the time-domain branch is effective in stabilizing global structures and low-frequency trends, while the frequency-domain branch focuses on high-frequency component, together forming a coarse-to-fine denoising mechanism. The forward and reverse process is illustrated in Fig.\ref{fig:forrev}.
\subsubsection{Loss Function}
As stated in Sec.\ref{sec:forward} and Sec.\ref{sec:reverse}, the overall noise injection process consists
of two successive sub-processes, \ie{time and frequency domain noise injection}. We adopt the commonly used noise 
estimation approach in diffusion models. 
The loss function consists of three components: 
(i) time domain noise estimation loss, \ie{the $l_2$ distance between the estimated time domain noise and forward time domain noise}, 
(ii) frequency domain noise estimation loss, \ie{the $l_2$ distance between the estimated frequency domain noise and forward frequency domain noise}.
(iii) consistency loss, \ie{the $l_2$ distance between total noise from step $k$ to step $k-1$}. This consistency term mitigates potential mismatches arising from estimating the two branches separately and discourages arbitrary residual reallocation across branches.
Accordingly, the overall loss is defined as:
\begin{equation}
    \mathcal{L}^{d} = \mathbb{E}_{\mathbf{x}_0\sim q(\mathbf{x}_0),k}(\Vert \boldsymbol{\epsilon}_k^t-\boldsymbol{\hat{\epsilon}}_k^t\Vert_2^2 +\Vert \boldsymbol{\epsilon}_k^f-\mathcal{F}^{-1}(\Lambda\boldsymbol{\hat{\epsilon}}_k^f)\Vert_2^2 +\omega\Vert(\boldsymbol{\epsilon}_k^t+\boldsymbol{\epsilon}_k^f)-(\hat{\boldsymbol{\epsilon}}_k^t+\mathcal{F}^{-1}(\Lambda\hat{\boldsymbol{\epsilon}}_k^f))\Vert_2^2),
\label{eq:loss}
\end{equation}
where $\omega$ is the weight for consistency loss. In addition, since our model is specifically designed for time series imputation task, we focus solely on the reconstruction error over masked regions during training, \ie{$\mathcal{L}= \mathbf{M}^t\odot\mathcal{L}^{d}$}. The detailed training and sampling algorithm is presented in Sec.\ref{sec:alg}.

\subsection{Frequency-aware Diffusion Embedding}
\label{sec:fade}
Considering the relationship between denoising steps and the frequency components of the data, we present the following proposition~\citep{DBLP:conf/icml/Ning0SJL0CSE25,DBLP:journals/corr/abs-2501-18232,DBLP:conf/cvpr/QianCPLYSM24}:
\begin{proposition}[Frequency Components in the Diffusion Process]
\label{prop:freq}
Consider the forward process where the noise is isotropic and has cumulative spectral energy $N_t$ at diffusion step $t$. Partition the frequency axis into ordered bands $\{B_b\}_{b=1}^B$ from low to high frequency, and let $E_b$ denote the band-averaged clean-signal energy in band $B_b$. If the signal satisfies the relative low-pass condition $E_{b_1} \ge E_{b_2}$ for any $b_1<b_2$, then for any SNR threshold $\gamma>0$, the threshold time
$
\tau_b(\gamma)
= \inf \left\{t \in [0,T]: \frac{E_b}{N_t} \le \gamma \right\}
$
satisfies
$
\tau_{b_2}(\gamma) \le \tau_{b_1}(\gamma),
\forall b_1<b_2.
$
That is, higher-frequency bands reach the same degradation threshold no later than lower-frequency bands during the forward process. Equivalently, the reverse process tends to recover low-frequency structures before progressively restoring high-frequency details.
\end{proposition}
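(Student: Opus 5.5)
The plan is to reduce the claim to a monotonicity comparison between two sublevel sets of the same function of $t$. The key point is that isotropy makes the noise energy $N_t$ independent of the band $b$. Band-averaged SNRs therefore differ only through the numerators $E_b$. The comparison needs one assumption that is implicit in the statement: the cumulative noise energy $t\mapsto N_t$ is nondecreasing on $[0,T]$. This holds for the DDPM-type forward processes considered here, since the injected variance accumulates over steps. I would state this explicitly at the start and note that it holds for the variance-preserving schedule of Sec.~\ref{sec:forward} after normalizing by the signal scaling $\bar\alpha$. Concretely, I interpret $E_b/N_t$ as the per-band SNR $\bar\alpha_t E_b/(1-\bar\alpha_t)$, which is decreasing in $t$.

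Step one is to define, for each band, the set $S_b(\gamma)=\{t\in[0,T]: E_b/N_t\le\gamma\}$. For $N_t>0$ this is the set $\{t: N_t\ge E_b/\gamma\}$. Because $N_t$ is nondecreasing, $S_b(\gamma)$ is an up-set of $[0,T]$, that is, an interval of the form $[c,T]$ or $(c,T]$, possibly empty. Its infimum is $\tau_b(\gamma)$, with the convention $\inf\emptyset=+\infty$ (or $T$).

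Step two is the comparison. Take $b_1<b_2$. By the relative low-pass condition, $E_{b_2}\le E_{b_1}$, so $E_{b_2}/\gamma\le E_{b_1}/\gamma$. Any $t$ with $N_t\ge E_{b_1}/\gamma$ therefore also satisfies $N_t\ge E_{b_2}/\gamma$. Hence $S_{b_1}(\gamma)\subseteq S_{b_2}(\gamma)$. Taking infima of nested sets reverses the inclusion, giving $\tau_{b_2}(\gamma)\le\tau_{b_1}(\gamma)$. This covers the empty-set convention as well. Notably, this inclusion argument does not actually use the monotonicity of $N_t$. Monotonicity is needed only for the interpretive statement that once a band crosses the threshold it stays degraded.

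Step three is the reverse-process interpretation. Running time backwards from $T$ to $0$, band $b$ leaves the degraded regime at time $\tau_b(\gamma)$, i.e., its SNR first exceeds $\gamma$ there. Since $\tau_{b_1}\ge\tau_{b_2}$, lower bands become recoverable earlier in the reverse trajectory. This gives the coarse-to-fine reading.

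The main obstacle is not the inequality itself but making the heuristic quantities precise. One has to specify that "isotropic noise" means equal per-frequency noise energy. For the rDFT of Sec.~\ref{sec:dft} this follows from orthogonality of $\mathbf{W}$: $\mathbf{W}\boldsymbol{\epsilon}\sim\mathcal{N}(0,\mathbf{I})$, so every coefficient receives variance $N_t/L$. Band averaging then gives the same $N_t$ for every band up to a common constant, which cancels in the comparison. I would state this reduction as a short lemma before the main argument.
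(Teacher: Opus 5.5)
Your proposal is correct and follows essentially the same argument as the paper. Because the noise level $N_t$ is common to all bands, $\mathrm{SNR}_b(t)\le\gamma$ becomes $N_t\ge E_b/\gamma$; the low-pass ordering $E_{b_2}\le E_{b_1}$ then puts the crossing set of $b_1$ inside that of $b_2$, and taking infima gives $\tau_{b_2}(\gamma)\le\tau_{b_1}(\gamma)$. Your observation that this inclusion alone yields the inequality, with monotonicity of $N_t$ needed only for the reverse-process reading, slightly sharpens the paper's last step, which appeals to $N_t$ being non-decreasing.
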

\begin{proof}
    The proof is in Appendix.\ref{proof:freq}.
\end{proof}

Motivated by Prop.~\ref{prop:freq}, we propose a frequency-aware diffusion embedding to capture the stage-dependent recoverability of different frequency components during denoising. As the noise level changes across diffusion steps, different frequency bands should be emphasized differently. To this end, the proposed embedding integrates three components: \textbf{noise-aware frequency gate}, \textbf{stage-aware frequency schedule}, and \textbf{missing-aware band reliability}.

\textbf{Noise-aware Frequency Gate}.~Given the accumulated diffusion noise, whose expected spectral energy satisfies $\mathbb{E}[\vert \hat{\epsilon}_t(\omega)\vert^2] = \int_0^t \vert g(s)\vert^2\mathrm{d}s$, the diffusion step $t$ corresponds to a frequency-agnostic noise scale $N_t=\int_0^t \vert g(s)\vert^2\mathrm{d}s$, which decreases monotonically in the reverse process. Since different bands have different signal-to-noise ratios, treating all frequency components uniformly is suboptimal. We therefore introduce a noise-aware gate, with calibration parameters $\gamma$ and $s$ controlling the gating threshold and its scale:
\begin{equation}
    \hat{g}_b(t) = \sigma\left(\frac{\log P_b^{\mathrm{sig}} -\log (\gamma N_ts)}{\tau}\right),
\end{equation}
where $P_b^{\mathrm{sig}}$ is the spectral energy of frequency band $b$ computed from the signal proxy detailed in Sec.~\ref{sec:mafm}, and $\tau$ controls the sharpness of the gate. To avoid numerical issues, we set a minimum value $g_{\min}$:
\begin{equation}
    g_b(t) = g_{\min} + (1-g_{\min})\hat{g}_{b}(t).
\end{equation}

\textbf{Stage-aware Frequency Schedule}.~Following Prop.~\ref{prop:freq}, we further encode the progressive low-to-high frequency recovery process into the diffusion embedding. For each band $b$, the stage-dependent scheduling weight $S_b(t)$ measures its relative importance at diffusion step $t$:
\begin{equation}
S_b(t) = \exp\left(-\left(\frac{\omega_b}{c(t)}\right)^p\right),
\end{equation}
where $\omega_b$ is the normalized frequency coordinate, $c(t)$ gradually relaxes the effective bandwidth as denoising progresses, and $p>0$ is a constant. Unlike the noise-aware gate, $S_b(t)$ depends only on $t$ and $\omega_b$, providing a stage-dependent frequency prior independent of actual spectral energy or noise magnitude.

\textbf{Band Reliability}.~For time series imputation, missing values can introduce spectral artifacts and weaken the reliability of frequency-domain modeling. We therefore define a missing-aware reliability weight for each frequency band:
\begin{equation}
    R_b = \frac{P_b^{\mathrm{sig}}}{P_b^{\mathrm{sig}}+\kappa P_b^{\mathrm{mask}}},
\end{equation}
where $P_b^{\mathrm{mask}}$ denotes the spectral energy induced by the missing pattern in band $b$, and $\kappa$ is a constant.

Finally, we combine the noise-aware gate, stage-aware schedule, and band reliability through element-wise multiplication, and project the result into the embedding space:
\begin{equation}
    \mathbf{g}(t) = \mathrm{Proj}\left(g(t)\odot S(t)\odot R,\, \frac{d}{2}\right),
    \label{eq:proj}
\end{equation}
where $\mathrm{Proj}(\cdot,\frac{d}{2})$ resamples $\lfloor\frac{L}{2}\rfloor + 1$ frequency bins to $\frac{d}{2}$ embedding dimensions. The resulting gating vector modulates sinusoidal basis functions to form the frequency-aware diffusion embedding:
\begin{equation}
    \mathbf{e}(t)
    =
    \left[
        \sin(t\,\mathbf{f}_d)\odot \mathbf{g}(t),
        \;
        \cos(t\,\mathbf{f}_d)\odot \mathbf{g}(t)
    \right],
    \label{eq:grid}
\end{equation}
where $\mathbf f_d$ denotes a linearly spaced frequency grid and $t$ is the diffusion step. Details are provided in Appendix~\ref{sec:appnafe}.
\subsection{Model Details}
\subsubsection{Missing-aware Frequency Modeling}
\label{sec:mafm}
Direct frequency transformation of the masked series $\tilde{\mathbf{X}}=\mathbf{M}\odot \mathbf{X}$ often induces spectral leakage, distorting the true spectral morphology. To mitigate this, we introduce a Signal Proxy mechanism that serves as a diffusion-synchronized spectral reference. Designed to ensure observational consistency and reliable estimation, the proxy evolves with the diffusion process to reflect the changing signal-to-noise ratio. Formally, we define the signal proxy $\tilde{\mathbf{x}}_{\mathrm{proxy}}$ at step $t$ as:
\begin{equation}
\tilde{\mathbf{x}}_{\mathrm{proxy}} = \mathbf M\odot \mathbf x_{\mathrm{obs}} + (1-\mathbf M)\odot \mathbf x_t
\end{equation}
where $\mathbf{x}_{\mathrm{obs}}$ denotes the observed values and $\mathbf{x}_t$ is the temporal latent variable. This formulation preserves the authentic signal structure at observed locations while populating missing intervals with state estimates consistent with the current diffusion phase.  This diffusion-aware mechanism effectively minimizes spectral interference, enabling robust frequency representation learning.

\subsubsection{Model Architecture}
We present the details of the denoising network.We adopt a cascaded dual-branch network as the core denoising architecture, 
where both branches share the same network structure. Considering the intra-channel 
and inter-channel dependencies in multivariate time series, the temporal-feature transformer architecture from 
CSDI~\citep{DBLP:conf/nips/TashiroSSE21} is adopted as the diffusion backbone.

Specifically, our model takes the noisy data, conditional information and step embedding as input. 
The input is first processed by a temporal attention module and a feature attention module for feature extraction,
followed by a linear projection to estimate time-domain noise $\hat{\boldsymbol{\epsilon}}_{t}$ 
To ensure that the frequency-domain branch focuses exclusively on estimating frequency-domain noise, 
we transform the output of the time-domain branch into the frequency domain via a given transformation $\mathcal{F}$ and use it as the input to the frequency-domain branch. 
The frequency-domain branch is symmetric to the time-domain branch, except that the step embedding is replaced with the frequency-aware diffusion embedding introduced 
in Sec.~\ref{sec:fade}. Finally, the frequency-domain output is mapped back to the time domain via $\mathcal{F}^{-1}$.
Overall, the model produces two outputs: the estimated time-domain noise $\hat{\boldsymbol{\epsilon}}_{t}$ and frequency-domain noise $\hat{\boldsymbol{\epsilon}}_{f}$. The details of our model architecture is in the Appendix~\ref{sec:archdetails}.

\begin{table*}[tbp]
\centering
\caption{Imputation performance on PhysioNet and Air Quality datasets. 
Best results are in \textbf{bold}; second-best are \underline{underlined}}
\label{tab:results}
\resizebox{0.75\linewidth}{!}{
\begin{sc}
\begin{tabular}{lcccccccc}
\hline
\multirow{2}{*}{Method} & \multicolumn{2}{c}{PhysioNet 10\%} & \multicolumn{2}{c}{PhysioNet 50\%} & \multicolumn{2}{c}{PhysioNet 90\%} & \multicolumn{2}{c}{Air Quality}    \\ \cline{2-9} 
                        & MAE              & RMSE            & MAE              & RMSE            & MAE              & RMSE            & MAE             & RMSE             \\ \hline
Mean                    & 0.714            & 1.035           & 0.711            & 1.091           & 0.710            & 1.097           & 50.685          & 66.558           \\
Lerp                    & 0.372            & 0.708           & 0.417            & 0.840           & 0.565            & 0.993           & 15.363          & 27.658           \\
BRITS                   & 0.278            & 0.693           & 0.385            & 0.833           & 0.560            & 0.975           & 16.519          & 26.775           \\
GPVAE                   & 0.469            & 0.783           & 0.521            & 0.907           & 0.642            & 1.038           & 23.941          & 40.586           \\
SSGAN                   & 0.323            & 0.662           & 0.449            & 0.852           & 0.670            & 1.060           & 32.999          & 48.951           \\
TimesNet                & 0.375            & 0.690           & 0.453            & 0.840           & 0.642            & 1.031           & 22.685          & 39.336           \\
CSDI                    & 0.215            & 0.491           & 0.307            & 0.673           & \underline{0.492}            & 0.851           & 9.347           & 18.713           \\
SAITS                   & 0.232            & 0.583           & 0.315            & 0.735           & 0.565            & 0.971           & 15.424          & 30.558           \\
ModernTCN               & 0.351            & 0.697           & 0.440            & 0.803           & 0.647            & 1.026           & 24.089          & 40.052           \\
LSCD                    & \underline{0.212}  & \underline{0.457} & \underline{0.305}            & \underline{0.658}           & \underline{0.492}            & \underline{0.845}           & \underline{9.340}  & \underline{18.270} \\
HyFAD(Ours)             & \textbf{0.206}   & \textbf{0.423}  & \textbf{0.288}   & \textbf{0.622}  & \textbf{0.448}  & \textbf{0.782} & \textbf{9.296} & \textbf{17.730}  \\ \hline
\end{tabular}
\end{sc}}
\vspace{-5mm}
\end{table*}

\section{Experiments}
\label{sec:exp}

\textbf{Experimental Settings}.~All experiments are implemented in Python 3.12 with Pytorch~\citep{DBLP:conf/nips/PaszkeGMLBCKLGA19} on a single Nvidia RTX 3090 GPU. We use Adam~\citep{DBLP:journals/corr/KingmaB14} with an initial learning rate of $1\times10^{-3}$ and a multi-step scheduler that decays the learning rate by 10 at 75\% and 90\% of the total epochs. Quadratic noise schedules are adopted for both time- and frequency-domain diffusion, and the frequency transformation $\mathcal{F}$ is set to rDFT as described in Sec.~\ref{sec:dft}.

\textbf{Datasets and Evaluation Metrics}.~We evaluate HyFAD on two widely used imputation benchmarks: PhysioNet~\citep{silva2012predicting} and Air Quality~\citep{DBLP:conf/ijcai/YiZZL16}. We report MAE and RMSE between the imputed values and ground truth at missing positions. More details are provided in Appendix~\ref{sec:dadetails}.

\textbf{Baselines}.~We compare HyFAD with classical methods~(Mean and Lerp), deep learning approaches~(BRITS~\citep{DBLP:conf/nips/CaoWLZLL18}, SAITS~\citep{DBLP:journals/eswa/DuCL23}, TimesNet~\citep{DBLP:conf/iclr/WuHLZ0L23}), deep generative models~(GP-VAE~\citep{DBLP:conf/aistats/FortuinBRM20}, SSGAN~\citep{DBLP:conf/aaai/MiaoWWGMY21}, CSDI~\citep{DBLP:conf/nips/TashiroSSE21}, LSCD~\citep{DBLP:conf/icml/FonsSEFVV25}), and a time-series foundation model~(ModernTCN~\citep{DBLP:conf/iclr/LuoW24}).

\subsection{Time Series Imputation Results}
We evaluate HyFAD under different missing-data scenarios on two real-world datasets, \ie{PhysioNet and Air Quality}. Table~\ref{tab:results} reports the quantitative results, with visualizations provided in Appendix~\ref{sec:appendetails}.

As shown in Table~\ref{tab:results}, HyFAD achieves the best performance across all missing rates on PhysioNet. Compared with the second-best baseline, it reduces MAE by 2.8\%, 5.6\%, and 8.9\% under 10\%, 50\%, and 90\% missingness, respectively, while also consistently lowering RMSE. On Air Quality, HyFAD also obtains the best MAE and RMSE, demonstrating its effectiveness across different real-world time-series scenarios. Please refer to Appendix.\ref{sec:pinterval} for more detailed comparison.
\subsection{Ablation Studies and Parameter Analysis}
To validate the effectiveness of the proposed frequency-aware diffusion embedding, we replace it with the diffusion embedding in \citep{DBLP:conf/nips/TashiroSSE21}. We further analyze several key hyperparameters, including the terminal values of the noise schedules in time- and frequency-domain diffusion, \ie{$\beta_{\mathrm{end}}^t$ and $\beta_{\mathrm{end}}^f$}, the balancing parameter $\lambda$ that controls the relative strengths of the two diffusion processes, and $\omega$, the weight of the consistency loss.


As shown in Table~\ref{tab:abl}, the proposed Frequency-Aware Diffusion Embedding consistently outperforms the standard diffusion embedding across all missing rates on PhysioNet. In addition, since CSDI can be viewed as the time-domain-only counterpart of HyFAD, the consistent improvements over CSDI in Table~\ref{tab:results} further validate the necessity of hybrid time-frequency modeling. These results support the core design of HyFAD: the time branch preserves global structure, while the frequency branch complements local dynamics and high-frequency details.

For noise allocation, the best performance is achieved when the temporal process is dominant ($\lambda=0.75$). The performance drops at $\lambda=0.25$ and $0.5$ indicate that frequency-domain diffusion is more suitable for refining local dynamics that are not fully captured in the time domain, rather than serving as the primary noise carrier. For diffusion schedules, the optimal temporal noise level appears at $\beta_\mathrm{end}^t=0.5$, while either excessive ($0.75$) or insufficient ($0.25$) noise weakens the model's ability to learn the underlying data distribution. Regarding $\beta_{\mathrm{end}}^f$, small values provide insufficient spectral information for refinement, whereas overly large values over-regularize the temporal branch and lead to degraded performance.


Finally, $\omega=0.4$ achieves the best overall trade-off, yielding the lowest RMSE across all missing rates while maintaining competitive MAE. Smaller weights lead to insufficient coupling, whereas larger weights may slightly improve some MAE values but generally degrade RMSE, suggesting over-constrained cross-domain fitting.

\begin{table*}[tbp]
\centering
\caption{Results of ablation studies and parameter analysis. "FA" stands for the proposed frequency-aware diffusion embedding and "DE" stands for the diffusion embedding in \citep{DBLP:conf/nips/TashiroSSE21}.}
\label{tab:abl}
\resizebox{0.75\linewidth}{!}{
\begin{sc}
\begin{tabular}{lllllllllll}
\hline
\multirow{2}{*}{Embedding} & \multirow{2}{*}{$\lambda$} & \multirow{2}{*}{$\beta_\mathrm{end}^t$} & \multirow{2}{*}{$\beta_\mathrm{end}^f$} & \multirow{2}{*}{$\omega$} & \multicolumn{2}{l}{PhysioNet 10\%} & \multicolumn{2}{l}{PhysioNet 50\%} & \multicolumn{2}{l}{PhysioNet 90\%} \\ \cline{6-11} 
                           &                            &                                         &                                         &                           & RMSE             & MAE             & RMSE             & MAE             & RMSE             & MAE             \\ \hline
FA                         & 0.75                       & 0.5                                     & 0.05                                    & 0.4                       & 0.423            & 0.206           & 0.622            & 0.288           & 0.782            & 0.448           \\ \hline
\textbf{DE}                & 0.75                       & 0.5                                     & 0.05                                    & 0.4                       & 0.439            & 0.208           & 0.638            & 0.291           & 0.798            & 0.451           \\ \hline
FA                         & \textbf{0.25}              & 0.5                                     & 0.05                                    & 0.4                       & 0.471            & 0.235           & 0.661            & 0.320           & 0.840            & 0.489           \\
FA                         & \textbf{0.5}               & 0.5                                     & 0.05                                    & 0.4                       & 0.503            & 0.216           & 0.660            & 0.299           & 0.815            & 0.462           \\ \hline
FA                         & 0.75                       & \textbf{0.25}                           & 0.05                                    & 0.4                       & 0.451            & 0.210           & 0.664            & 0.294           & 0.806            & 0.455           \\
FA                         & 0.75                       & \textbf{0.75}                           & 0.05                                    & 0.4                       & 0.475            & 0.209           & 0.667            & 0.312           & 0.810            & 0.473           \\ \hline
FA                         & 0.75                       & 0.5                                     & \textbf{0.01}                           & 0.4                       & 0.490            & 0.213           & 0.650            & 0.294           & 0.806            & 0.456           \\
FA                         & 0.75                       & 0.5                                     & \textbf{0.1}                            & 0.4                       & 0.433            & 0.210           & 0.639            & 0.290           & 0.797            & 0.451           \\
FA                         & 0.75                       & 0.5                                     & \textbf{0.25}                           & 0.4                       & 0.464            & 0.225           & 0.661            & 0.306           & 0.812            & 0.467           \\
FA                         & 0.75                       & 0.5                                     & \textbf{0.5}                            & 0.4                       & 0.581            & 0.261           & 0.700            & 0.341           & 0.819            & 0.475           \\ \hline
FA                         & 0.75                       & 0.5                                     & 0.05                                    & \textbf{0.0}              & 0.432            & 0.209           & 0.629            & 0.290           & 0.798            & 0.451           \\
FA                         & 0.75                       & 0.5                                     & 0.05                                    & \textbf{0.2}              & 0.441            & 0.208           & 0.649            & 0.290           & 0.800            & 0.45            \\
FA                         & 0.75                       & 0.5                                     & 0.05                                    & \textbf{0.6}              & 0.441            & 0.206           & 0.641            & 0.289           & 0.797            & 0.449           \\
FA                         & 0.75                       & 0.5                                     & 0.05                                    & \textbf{0.8}              & 0.430            & 0.205           & 0.652            & 0.289           & 0.799            & 0.451           \\
FA                         & 0.75                       & 0.5                                     & 0.05                                    & \textbf{1.0}              & 0.441            & 0.205           & 0.651            & 0.288           & 0.794            & 0.447           \\ \hline
\end{tabular}
\end{sc}}
\vspace{-5mm}
\end{table*}

\section{Related Works}
Time series imputation aims to recover complete sequences from partially observed data. Early methods such as mean and linear interpolation (Lerp) rely on simple statistical assumptions and often fail to capture complex temporal dynamics. Deep learning-based approaches, including BRITS~\citep{DBLP:conf/nips/CaoWLZLL18}, SAITS~\citep{DBLP:journals/eswa/DuCL23}, and TimesNet~\citep{DBLP:conf/iclr/WuHLZ0L23}, improve imputation by modeling complex temporal dependencies.

Diffusion models have recently been introduced to time series imputation, motivated by their success in other domains~\citep{DBLP:conf/iccv/PeeblesX23,DBLP:conf/icml/RasulSSV21,DBLP:conf/nips/SahariaCSLWDGLA22}. CSDI~\citep{DBLP:conf/nips/TashiroSSE21} formulates imputation as conditional score matching with observed values as conditions, while SSSD~\citep{DBLP:journals/tmlr/AlcarazS23} incorporates state space models for temporal modeling. Recent works also explore frequency-domain information: FGTI~\citep{DBLP:conf/nips/YangSYC24} introduces frequency-space conditions for multi-domain representation learning, and FourierDiffusion~\citep{DBLP:conf/icml/CrabbeHSS24} transforms signals into the spectral domain and conducts diffusion entirely in frequency space.

Frequency-domain modeling has also been widely studied in time-series analysis, as spectral characteristics provide complementary information beyond the time domain. Fedformer~\citep{DBLP:conf/icml/ZhouMWW0022} combines decomposition with a frequency-domain Transformer and frequency mode selection for efficient long-term forecasting. FiLM~\citep{DBLP:conf/nips/ZhouMWW0YY022} uses Legendre memory with Fourier- and low-rank-based frequency-enhanced layers to preserve dominant low-frequency patterns and suppress noise. TFAD~\citep{DBLP:conf/cikm/ZhangZWS22} combines time- and frequency-domain analysis with sequence decomposition, context comparison, and data augmentation for anomaly detection, where the frequency branch mainly provides a discriminative representation space.

HyFAD differs from prior works by explicitly coupling time- and frequency-domain diffusion for time-series imputation. Rather than using frequency information only for representation learning, HyFAD performs forward and reverse diffusion directly in both domains. It further introduces frequency-aware diffusion embedding and step-aware spectral guidance to support progressive denoising and reconstruction, improving recovery of both global trends and fine-grained temporal variations.

\section{Conclusion}

In this paper, we propose HyFAD, a hybrid time-frequency diffusion model with frequency-aware diffusion embedding. By integrating time- and frequency-domain processes within a unified diffusion framework, HyFAD enables coarse-to-fine reconstruction. The proposed frequency-aware diffusion-step embedding adaptively emphasizes appropriate spectral components during denoising, enhancing reconstruction of high-frequency components. Experiments demonstrate consistently strong performance across multiple datasets and varying missing rates.

\clearpage
{
\small

\bibliography{example_paper}
\bibliographystyle{unsrtnat}
}

\appendix
\section{Appendix}
\label{sec:appendix}
\subsection{Choice of $\mathcal{F}$ and $\Lambda$}
Notably, our framework imposes no strict restrictions on the transform $\mathcal{F}$, as any invertible linear frequency-domain transformation satisfies our requirements. On the other hand, the $\Lambda$ matrix is intrinsically coupled with the chosen transform $\mathcal{F}$. Any modification to the transform $\mathcal{F}$ necessitates a corresponding adjustment to the $\Lambda$ matrix to ensure energy conservation between the time and frequency domains.
\subsection{Details in Forward Process}
\textbf{Posterior of Eq.\ref{eq:addnoisef2}}: Eq.\ref{eq:addnoisef2} presents 
\begin{equation}
\mathcal{F}^{-1}(\mathbf{x}_k^f) = \sqrt{\alpha_k^f}\mathbf{x}_{k-1}^t + \sqrt{\beta_k^f}\sqrt{1-\lambda}\mathcal{F}^{-1}(\Lambda\boldsymbol{\epsilon}_k^f),\qquad\boldsymbol{\epsilon}_k^f\sim\mathcal{N}(0,\mathbf{I}). 
\end{equation}

$\mathcal{F}^{-1}(\mathbf{x}_{k-1}^t)$ is a linear transformation of $\mathbf{x}_k^t$ along with a Gaussian term, Therefore, the posterior $q(\mathcal{F}^{-1}(\mathbf{x}_{k}^f)\vert \mathbf{x}_{k-1}^t)$ remains gaussian. The posterior mean is $\sqrt{\alpha_k^f}\mathbf{x}_{k-1}^t$ and the standard deviation is 
\begin{equation}
(\sqrt{\beta_k^f}\sqrt{1-\lambda}\mathcal{F}^{-1}\Lambda)(\sqrt{\beta_k^f}\sqrt{1-\lambda}\mathcal{F}^{-1}\Lambda)^T=\beta_k^f(1-\lambda)\mathcal{F}^{-1}\Lambda\Lambda^T(\mathcal{F}^{-1})^T
\end{equation}
Here $\mathcal{F}^{-1}$ and $(\mathcal{F}^{-1})^T$ denotes a matrix $\mathbf{U}$ due to the linearity of transformation $\mathcal{F}$, \ie{$\mathcal{F}(\mathbf{x}) = \mathbf{Ux}$}.

\textbf{Posterior of Eq.\ref{eq:addnoiset}}: Eq.\ref{eq:addnoiset} presents $    \mathbf{x}_k^t = \sqrt{\alpha_k^t}\mathcal{F}^{-1}(\mathbf{x}_k^f) + \sqrt{\beta_k^t}\sqrt{\lambda}\boldsymbol{\epsilon}_k^t$, $\boldsymbol{\epsilon}_k^t\sim\mathcal{N}(0,\mathbf{I})$.$\mathbf{x}_k^t$ is a linear transformation of $\mathcal{F}^{-1}(\mathbf{x}_k^f)$ with a Gaussian term $\sqrt{\beta_k^t}\sqrt{\lambda}\boldsymbol{\epsilon}_k^t$. Therefore, the posterior is still Gaussian with mean $\sqrt{\alpha_{k}^t}\mathcal{F}^{-1}(\mathbf{x}_k^f)$ and standard deviation $\beta_k^t\lambda\mathbf{I}$.

\textbf{Proof of Eq.\ref{eq:addnoisetotal}}~(the relationship between $\mathbf{x}_k^t$ and $\mathbf{x}_k^0$ in the forward process):

From Eq.\ref{eq:addnoiseall}:
\begin{equation}
    \mathbf{x}_k^t = \sqrt{\alpha_k^t\alpha_k^f}\mathbf{x}_{k-1}^t + \sqrt{\alpha_k^t(1-\alpha_k^f)}\sqrt{1-\lambda}\mathcal{F}^{-1}(\Lambda\boldsymbol{\epsilon}_k^f)+\sqrt{1-\alpha_k^t}\sqrt{\lambda}\boldsymbol{\epsilon}_k^t
    \label{eq:appen1}
\end{equation}
we can get:
\begin{equation}
    \mathbf{x}_k^t = \sqrt{\bar{\alpha}_k^t\bar{\alpha}_k^f}\mathbf{x}_0^t + \sqrt{1-\lambda}\sum_{s=1}^k\sqrt{\beta_{s}^f}\sqrt{\frac{\bar{\alpha}_k^t}{\bar{\alpha}_{s-1}^t}}\sqrt{\frac{\bar{\alpha}_k^f}{\bar{\alpha}_s^f}}\mathcal{F}^{-1}(\Lambda\boldsymbol{\epsilon}_s^f) + \sqrt{\lambda}\sum_{s=1}^k\sqrt{\beta_s^t}\sqrt{\frac{\bar{\alpha}_k^t}{\bar{\alpha}_s^t}}\boldsymbol{\epsilon}_s^t
    \label{eq:appen2}
\end{equation}
\begin{proof}
We proceed by mathematical induction.  

For $k = 1$:
\begin{equation}
\begin{aligned}
    \mathbf{x}_1^t &= \sqrt{\alpha_1^t\alpha_1^f}\mathbf{x}_0^t + \sqrt{\alpha_1^t(1-\alpha_1^f)}\sqrt{1-\lambda}\mathcal{F}^{-1}(\Lambda\boldsymbol{\epsilon}_1^f)+\sqrt{1-\alpha_1^t}\sqrt{\lambda}\boldsymbol{\epsilon}_1^t \\
    &= \sqrt{\bar{\alpha}_1^t\bar{\alpha}_1^f}\mathbf{x}_0^t + \sqrt{1-\lambda}\sqrt{\beta_1^f}\sqrt{\alpha_1^t}\mathcal{F}^{-1}(\Lambda\boldsymbol{\epsilon}_1^f) + \sqrt{\lambda}\sqrt{\beta_1^t}\boldsymbol{\epsilon}_1^t\\
    &= \sqrt{\bar{\alpha}_1^t\bar{\alpha}_1^f}\mathbf{x}_0^t + \sqrt{1-\lambda}\sum_{s=1}^1\sqrt{\beta_1^f}\sqrt{\frac{\bar{\alpha}_1^t}{\bar{\alpha}_0^t}}\sqrt{\frac{\bar{\alpha}_1^f}{\bar{\alpha}_1^f}}\mathcal{F}^{-1}(\Lambda\boldsymbol{\epsilon}_1^t) + \sqrt{\lambda}\sum_{s=1}^1\sqrt{\beta_1^t}\sqrt{\frac{\bar{\alpha}_1^t}{\bar{\alpha}_1^t}}\boldsymbol{\epsilon}_1^t
\end{aligned}
\end{equation}
Therefore, Eq.\ref{eq:appen2} holds when $k=1$. Suppose Eq.\ref{eq:appen2} holds when $k=m$, \ie{}
\begin{equation}
    \mathbf{x}_m^t = \sqrt{\bar{\alpha}_m^t\bar{\alpha}_m^f}\mathbf{x}_0^t + \sqrt{1-\lambda}\sum_{s=1}^m\sqrt{\beta_{s}^f}\sqrt{\frac{\bar{\alpha}_m^t}{\bar{\alpha}_{s-1}^t}}\sqrt{\frac{\bar{\alpha}_m^f}{\bar{\alpha}_s^f}}\mathcal{F}^{-1}(\Lambda\boldsymbol{\epsilon}_s^f) + \sqrt{\lambda}\sum_{s=1}^m\sqrt{\beta_s^t}\sqrt{\frac{\bar{\alpha}_m^t}{\bar{\alpha}_s^t}}\boldsymbol{\epsilon}_s^t
\end{equation}
For $k=m+1$:
\begin{equation}
\begin{aligned}
\mathbf{x}_{m+1}^t &= \sqrt{\alpha_{m+1}^t\alpha_{m+1}^f}\mathbf{x}_{m}^t + \sqrt{\alpha_{m+1}^t(1-\alpha_{m+1}^f)}\sqrt{1-\lambda}\mathcal{F}^{-1}(\Lambda\boldsymbol{\epsilon}_{m+1}^f)+\sqrt{1-\alpha_{m+1}^t}\sqrt{\lambda}\boldsymbol{\epsilon}_{m+1}^t    \\
&= \sqrt{\alpha_{m+1}^t\alpha_{m+1}^f}(\sqrt{\bar{\alpha}_m^t\bar{\alpha}_m^f}\mathbf{x}_0^t + \sqrt{1-\lambda}\sum_{s=1}^m\sqrt{\beta_{s}^f}\sqrt{\frac{\bar{\alpha}_m^t}{\bar{\alpha}_{s-1}^t}}\sqrt{\frac{\bar{\alpha}_m^f}{\bar{\alpha}_s^f}}\mathcal{F}^{-1}(\Lambda\boldsymbol{\epsilon}_s^f)  \sqrt{\lambda}\sum_{s=1}^m\sqrt{\beta_s^t}\sqrt{\frac{\bar{\alpha}_m^t}{\bar{\alpha}_s^t}}\boldsymbol{\epsilon}_s^t)\\
&+ \sqrt{\alpha_{m+1}^t(1-\alpha_{m+1}^f)}\sqrt{1-\lambda}\mathcal{F}^{-1}(\Lambda\boldsymbol{\epsilon}_{m+1}^f)+\sqrt{1-\alpha_{m+1}^t}\sqrt{\lambda}\boldsymbol{\epsilon}_{m+1}^t \\
&= \sqrt{\bar{\alpha}_{m+1}^t\bar{\alpha}_{m+1}^f}\mathbf{x}_0^t + \sqrt{1-\lambda}\sum_{s=1}^{m+1}\sqrt{\beta_{s}^f}\sqrt{\frac{\bar{\alpha}_{m+1}^t}{\bar{\alpha}_{s-1}^t}}\sqrt{\frac{\bar{\alpha}_{m+1}^f}{\bar{\alpha}_s^f}}\mathcal{F}^{-1}(\Lambda\boldsymbol{\epsilon}_s^f) + \sqrt{\lambda}\sum_{s=1}^{m+1}\sqrt{\beta_s^t}\sqrt{\frac{\bar{\alpha}_{m+1}^t}{\bar{\alpha}_s^t}}\boldsymbol{\epsilon}_s^t
\end{aligned}
\end{equation}
Therefore, Eq.\ref{eq:appen2} holds for arbitrary $k$.
\end{proof}
\textbf{Posterior of Eq.\ref{eq:addnoisetotal}}: Eq.\ref{eq:addnoisetotal} indicates $\mathbf{x}_k^t$ is linear transformation of $\mathbf{x}_0^t$ with two groups of independent gaussian noise. Therefore, $q(\mathbf{x}_k^t\vert\mathbf{x}_0^t)$ is still gaussian with the standard deviation of the sum of two the two groups of gaussian noise. 
\subsection{Details in the reverse process}
\textbf{Noise prior in the reverse process.} At the end of the forward process, $\bar{\alpha}_k^t,\bar{\alpha}_k^f\to 0$, therefore, the mean of $\mathbf{x}_k^t$ is $0$. For the standard deviation term, it is the linear combination of two independent gaussian noises, so the reverse process starts from $\mathbf{x}_{T}=\sqrt{\lambda}\boldsymbol{\epsilon}^t+\sqrt{1-\lambda}\mathcal{F}^{-1}(\Lambda\boldsymbol{\epsilon}^f),\boldsymbol{\epsilon}^t,\boldsymbol{\epsilon}^f\sim\mathcal{N}(0,\mathbf{I})$.

\subsection{Proof of Proposition \ref{prop:freq}}
\label{proof:freq}
\begin{proposition}[Frequency Components in the Diffusion Process]
\label{prop:freqa}
Consider the forward process where the noise is isotropic and has cumulative spectral energy $N_t$ at diffusion step $t$. Partition the frequency axis into ordered bands $\{B_b\}_{b=1}^B$ from low to high frequency, and let $E_b$ denote the band-averaged clean-signal energy in band $B_b$. If the signal satisfies the relative low-pass condition $E_{b_1} \ge E_{b_2}$ for any $b_1<b_2$, then for any SNR threshold $\gamma>0$, the threshold time
$
\tau_b(\gamma)
= \inf \left\{t \in [0,T]: \frac{E_b}{N_t} \le \gamma \right\}
$
satisfies
$
\tau_{b_2}(\gamma) \le \tau_{b_1}(\gamma),
\forall b_1<b_2.
$
That is, higher-frequency bands reach the same degradation threshold no later than lower-frequency bands during the forward process. Equivalently, the reverse process tends to recover low-frequency structures before progressively restoring high-frequency details.
\end{proposition}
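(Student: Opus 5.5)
The plan is a monotonicity and set-inclusion argument. The key modeling fact is that the noise is isotropic, so its spectral energy is the same scalar $N_t$ in every band. The band index therefore enters the SNR only through the numerator $E_b$.

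First, I would make explicit the standing assumption that $N_t$ is nondecreasing in $t$. This holds because it is a cumulative energy, $N_t=\int_0^t |g(s)|^2\,\mathrm{d}s$, as in the noise-aware gate discussion. I would also take $N_t>0$ for $t>0$, so that the ratio $E_b/N_t$ is well defined. With this, I define the ``degraded set'' $D_b(\gamma)=\{t\in[0,T]: E_b/N_t\le\gamma\}$. Rewriting the condition as $E_b\le \gamma N_t$, and using monotonicity of $N_t$, shows that $D_b(\gamma)$ is an up-set (an interval of the form $[\tau_b,T]$ or $(\tau_b,T]$). We have $\tau_b(\gamma)=\inf D_b(\gamma)$, with the convention $\inf\emptyset=+\infty$ (or $T$) if the threshold is never reached.

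Second, fix $b_1<b_2$. By the relative low-pass condition, $E_{b_2}\le E_{b_1}$. Hence for any $t\in D_{b_1}(\gamma)$ we get $E_{b_2}/N_t\le E_{b_1}/N_t\le\gamma$, so $t\in D_{b_2}(\gamma)$. This gives the inclusion $D_{b_1}(\gamma)\subseteq D_{b_2}(\gamma)$. Taking infima, which reverse inclusion, yields $\tau_{b_2}(\gamma)\le\tau_{b_1}(\gamma)$. If $D_{b_1}$ is empty, the inequality is trivial under the $+\infty$ convention. Note that this inclusion step does not actually require monotonicity of $N_t$; monotonicity only matters for reading $\tau_b$ as a crossing time after which the band stays degraded.

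Third, for the ``equivalently'' clause about the reverse process, I would argue interpretively. The reverse chain traverses $t$ from $T$ down to $0$, so band $b$ becomes recoverable (SNR above $\gamma$) once $t<\tau_b(\gamma)$. Since $\tau_{b_1}\ge\tau_{b_2}$, lower bands cross this recoverability threshold earlier in reverse time. I would present this as a restatement rather than a new claim.

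The main obstacle is not analytic but definitional. I must pin down precisely what ``isotropic noise with cumulative spectral energy $N_t$'' means in the HyFAD hybrid process, where frequency-domain noise is added through $\mathcal{F}^{-1}\Lambda$. I would handle this by observing that with the orthonormal rDFT of Sec.~\ref{sec:dft} and $\Lambda=\mathbf{I}$, the covariance $\boldsymbol{\Sigma}_k$ in Eq.~\ref{eq:posterior} is a scalar multiple of $\mathbf{I}$. Such a covariance is invariant under the orthogonal $\mathbf{W}$, so every frequency coordinate receives equal noise variance and the hypothesis applies. I would also note a second subtlety: the signal is scaled by $\sqrt{\bar\alpha_k^t\bar\alpha_k^f}$. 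However, this factor is common to all bands, so it can be absorbed into an effective $N_t$ by dividing the noise variance by $\bar\alpha_k^t\bar\alpha_k^f$. That effective $N_t$ is still band-independent and increasing, so the argument above is unchanged.
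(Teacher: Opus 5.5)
Your proposal is correct and follows essentially the same argument as the paper, which also rewrites $E_b/N_t\le\gamma$ as $N_t\ge E_b/\gamma$ and uses $E_{b_2}\le E_{b_1}$ to show that degradation of band $b_1$ forces degradation of band $b_2$; this is exactly your inclusion $D_{b_1}(\gamma)\subseteq D_{b_2}(\gamma)$. Your added points are correct refinements of that same route rather than a different proof: the infimum comparison does not need monotonicity of $N_t$ (the paper invokes it at the last step), you handle the empty-set convention, and you derive isotropy in HyFAD from $\boldsymbol{\Sigma}_k$ being a scalar multiple of $\mathbf{I}$ under the orthonormal rDFT with $\Lambda=\mathbf{I}$.
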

\begin{proof}
Considering a diffusion process 
\begin{equation}
\mathrm{d}\mathbf{x}_t = \bm{f(x,t)}\mathrm{d}t + g(t)\mathrm{d}\bm{w_t}
\label{eq:diff}
\end{equation}
where $\bm{w_t}$ is the standard Wiener process in $\mathbb{R}^{\mathrm{d}_{\bm{x}}}$, 
$\bm{f}:\mathbb{R}^{\mathrm{d}_{\bm{x_t}}}\times [0,T]\to \mathbb{R}^{\mathrm{d}_{\bm{x_t}}}$ is the drift 
and $g(t):[0,T]\to\mathbb{R}^{N\times N}$ is the diffusion coefficient. The solution to 
Eq.\ref{eq:diff} is formulated as:
\begin{equation}
    \mathbf{x}_t = \mathbf{x}_0 + \int_0^t\bm{f}(\mathbf{x}_s,s)\mathrm{d}s + \int_0^tg(s)\mathrm{d}\bm{w_s}
    \label{eq:rev}
\end{equation}
By applying DFT to Eq.\ref{eq:rev}, we have: \begin{equation}
    \hat{\mathbf{x}}_t(\omega) = \hat{\mathbf{x}}_0(\boldsymbol{\omega}) + 
    \hat{f}(\boldsymbol{\omega}) + \hat{\boldsymbol{\epsilon}}_t(\boldsymbol{\omega}),
    \label{eq:xtfreq}
\end{equation}
where $\hat{\mathbf{x}}_t(\omega)$ is the DFT of the noisy item and satisfies:
$       \mathbb{E}[ \hat{\boldsymbol{\epsilon}}_t(\omega)] = 0 $ and $
        \mathbb{E}[\vert  \hat{\boldsymbol{\epsilon}}_t(\omega)\vert^2] = \int_0^t\vert g(s)\vert^2\mathrm{d} s$.
The signal-noise-ratio~(SNR) of frequency $\omega$ is defined as:
\begin{equation}
    \text{SNR}(\omega) = \frac{\vert\hat{\mathbf{x}}_t(\omega)\vert^2}{\mathbb{E}[\vert  \hat{\boldsymbol{\epsilon}}_t(\omega)\vert^2]}
    = \frac{\vert\hat{\mathbf{x}}_t(\omega)\vert^2}{\int_0^t\vert g(s)\vert^2\mathrm{d} s}
    \label{eq:SNR}
\end{equation}
We partition the frequency axis into $B$ disjoint bands $\{\mathcal{B}_b\}_{b=1}^B$ from low to high frequency. Let $N_t := \int_0^t |g(s)|^2\,\mathrm{d}s$ denote the cumulative noise intensity up to diffusion step $t$. We define the band-averaged signal energy as $E_b := \mathbb{E}_{\omega \in \mathcal{B}_b}\!\left[ \mathbb{E}\big[|\hat{\mathbf{x}}_0(\omega)|^2\big] \right]$. Under the assumption that the second-order statistics of the transformed noise do not explicitly depend on frequency, the band-level SNR for any band $b$ can be written as
$$
\mathrm{SNR}_b(t)=\frac{E_b}{N_t}.
$$
For any $\gamma>0$, we have
$$
\mathrm{SNR}_b(t)\le\gamma
\iff
\frac{E_b}{N_t}\le\gamma
\iff
N_t\ge \frac{E_b}{\gamma}.
$$
For any $b_1<b_2$, the relative low-pass assumption gives
$$
E_{b_1}\ge E_{b_2}
\quad\Longrightarrow\quad
\frac{E_{b_1}}{\gamma}\ge \frac{E_{b_2}}{\gamma}.
$$
Therefore, if at some time $t$,
$$
N_t\ge \frac{E_{b_1}}{\gamma},
$$
then it must also hold that
$$
N_t\ge \frac{E_{b_2}}{\gamma}.
$$
Equivalently, if the lower-frequency band $b_1$ already satisfies
$$
\mathrm{SNR}_{b_1}(t)\le\gamma,
$$
then the higher-frequency band $b_2$ must also satisfy
$$
\mathrm{SNR}_{b_2}(t)\le\gamma.
$$
This shows that, in the forward diffusion process, higher-frequency bands reach the same SNR threshold no later than lower-frequency bands. Furthermore, since $N_t$ is non-decreasing with respect to $t$, the first threshold-crossing times satisfy
$$
\tau_{b_2}(\gamma)\le \tau_{b_1}(\gamma).
$$
\end{proof}
\subsection{Details of Frequency-aware Diffusion Embedding}
\label{sec:appnafe}

\textbf{Factor $s$ in Noise-aware Frequency Gate}.~Due to substantial variations in spectral energy scales across different datasets and variables, we introduce a global calibration factor $s$ to align the magnitude of band-wise spectral energy, which ensures the thresholding operation in the frequency-aware diffusion embedding is performed on a consistent relative scale.

During training, the factor $s$ is updated per batch using a global exponential moving average~(EMA). Specifically, given the band-wise spectral energy $P_b^s$ of the current batch, $s_0$ is initialized as the median of $P_b^s$:
\begin{equation}
    s_0 = \mathrm{median}(P_b^s),
\end{equation}
$s$ is then updated as:
\begin{equation}
    s_k \leftarrow \alpha s_{k-1} + (1-\alpha)\hat{s}_{k},
\end{equation}
where $\alpha \in (0,1)$ denotes the EMA decay rate and we set $\alpha=0.99$ in our implementation, $s_{k-1}$ is the factor after batch $k-1$ and $\hat{s}_k$ is the observations from batch $k$, \ie{$\hat{s}_k=\mathrm{median}(P_k^s)$}. During inference, the factor $s$ is fixed to the value accumulated during training and is no longer updated. This design enforces a strict separation between training and testing statistics, thereby preventing potential data leakage and ensuring consistent gating behavior across test samples.

\textbf{$\omega_b$ and $c(t)$ in Stage-aware Frequency Schedule}.~Let $b\in\{0,1,\cdots,B-1\}$ index the $B=\lfloor \frac{L}{2}\rfloor + 1$ frequency bands. The relative position of the $b$-th band is:
\begin{equation}
    \tilde{\omega}_b = \frac{b}{B-1},
\end{equation}
To align the relative frequency coordinate with the numerical scale of the stage-dependent cutoff, we further multiply a max frequency $\omega_{\max}$:
\begin{equation}
    \omega_b = \tilde{\omega}_b\cdot \omega_{\max}
\end{equation}

The function $c(t)$ is a stage-dependent frequency scaling function that controls the effective frequency range emphasized by the model at diffusion step $t$, which is defined as:
\begin{equation}
    c(t) = c_{\min} + (c_{\max} - c_{\min})\left(1-\frac{t}{T}\right)^q,
    \label{eq:ct}
\end{equation}
where $t$ is the current diffusion step and $T$ is the total number of diffusion steps. $c_{\min}$ and $c_{\max}=\omega_{\max}$ are the minimum and maximum cutoff frequency, $q>0$ is a transition exponent. $c(t)$ remains close to $c_{\min}$ during the early stages of the reverse diffusion process, thereby emphasizing low-frequency components. As the denoising process progresses, $c(t)$ gradually approaches $c_{\max}$, allowing the model to progressively incorporate higher-frequency details.

\textbf{Embedding Details}. The projection operator $\mathrm{Proj}(\cdot,\tfrac{d}{2})$ in Eq.\ref{eq:proj} is implemented via one-dimensional linear interpolation along the frequency axis
using \texttt{torch.nn.functional.interpolate} (\texttt{mode='linear'}),
which resamples the frequency-wise gating vector from
$\lfloor L/2\rfloor+1$ frequency bins to $\tfrac{d}{2}$ embedding dimensions. 

The frequency grid $\mathbf{f}_d= [f_1,f_2,\cdots,f_{\frac{d}{2}}]\in \mathbb{R}^{\frac{d}{2}}$ in Eq.\ref{eq:grid} is constructed by uniform sampling from $[0,f_{\max}]$:
\begin{equation}
    f_i = \frac{i-1}{\frac{d}{2}-1}f_{\max}
\end{equation}

\subsection{Architecture Details}
\label{sec:archdetails}
\begin{figure}[htbp]
    \centering
    \includegraphics[width=0.95\linewidth]{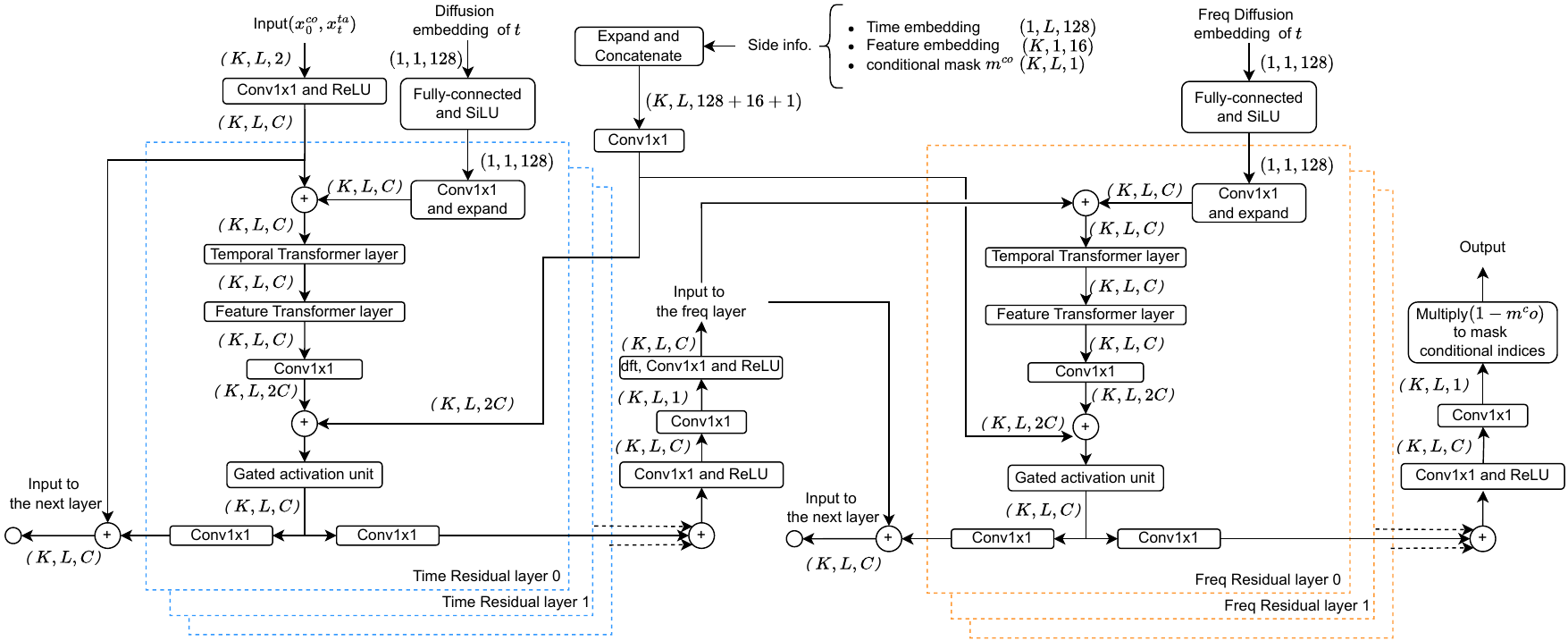}
    \caption{Architecture details of our denoising model $\boldsymbol{\epsilon}_{\theta}$.}
    \label{fig:arch}
\end{figure}
Fig.\ref{fig:arch} presents the diffusion backbone of our denoising block. The block adopts a time-frequency coupled dual-branch residual The input sequence is first projected using a $1\times1$ convolution followed by a ReLU activation. Time and Frequency diffusion step embeddings are generated via fully connected layers with SiLU activations and then concatenated with external side information.

During residual modeling, the network first processes the representation in the time-domain branch. After an initial $1\times1$ convolution and ReLU projection, the features are passed through a temporal Transformer layer and a feature Transformer layer to capture inter-channel and intra-channel dependencies. The outputs are fused using a gated activation unit (GAU) and further refined by stacked $1\times1$ convolutions. Residual connections feed the results back into the input stream to form the input for the next layer.

Then the time-domain representation is then fed into the frequency-domain branch with a similar structure to get the frequency domain output,. By processing in a time-to-frequency order, the architecture establishes a complementary relationship between global trend modeling and fine-grained frequency refinement.
\subsection{Training and Sampling Details}
\label{sec:alg}
The detailed training and sampling algorithm is presented in Alg.\ref{alg:training} and \ref{alg:inference}.
\begin{algorithm}[htbp]
  \caption{Training Procedure of HyFAD}
  \label{alg:training}
  \begin{algorithmic}[1]
    \STATE \textbf{Input:} Observed sequence $\mathbf{x}_0$, condition mask $\mathbf{M}^{\text{cond}}$, observation mask $\mathbf{M}^{\text{obs}}$,
    side information $\mathbf{s}$, number of steps $T$,number of iterations $N$, time- and frequency-domain scheduler $\beta^t,\beta^f$, total denoising step $T$, transformation $\mathcal{F}$, noise balance coefficient $\lambda$, consistency weight $\omega$.
    \STATE \textbf{Output:} Denoising function $\boldsymbol{\epsilon}_{\theta}^t, \boldsymbol{\epsilon}_{\theta}^f$
    \FOR{$i=1$ \textbf{to} T}
    \STATE $k\sim$ \text{Uniform}$(\{1,2,\cdots,T\})$
    \STATE Calculate time-domain noise:$\boldsymbol{\epsilon}_k^t = \sqrt{\lambda}\sum_{s=1}^k\sqrt{\beta_s^t}\sqrt{\frac{\bar{\alpha}_k^t}{\bar{\alpha}_s^t}}\boldsymbol{\epsilon}_s^t,$
    $\boldsymbol{\epsilon}_s^t\sim\mathcal{N}(0,\mathbf{I})$.
    \STATE Calculate frequency-domain noise:$\boldsymbol{\epsilon}_k^f=\sqrt{1-\lambda}\sum_{s=1}^k\sqrt{\beta_{s}^f}\sqrt{\frac{\bar{\alpha}_k^t}{\bar{\alpha}_{s-1}^t}}\sqrt{\frac{\bar{\alpha}_k^f}{\bar{\alpha}_s^f}}\mathcal{F}^{-1}(\Lambda\boldsymbol{\epsilon}_s^f),\boldsymbol{\epsilon}_s^f\sim\mathcal{N}(0,\mathbf{I})$. 
    \STATE Calculate noisy sample at step $k$:$\mathbf{x}_k^t = \sqrt{\alpha_k^t\alpha_k^f}\mathbf{x}_0^t+\boldsymbol{\epsilon}_k^t+\boldsymbol{\epsilon}_k^f$
    \STATE Construct time domain input: $\mathbf{h}^t \leftarrow \texttt{set\_input}(\mathbf{x}_k^t,\mathbf{X}^{\text{obs}},\mathbf{M}^{\text{cond}})$
    \STATE Estimate time domain noise and calculate time noise loss:$\hat{\boldsymbol{\epsilon}}_k^t = \boldsymbol{\epsilon}_{\theta}^t(\mathbf{h}_t,\mathbf{s},k)$, $\mathcal{L}^t=\Vert\hat{\boldsymbol{\epsilon}}_k^t-\boldsymbol{\epsilon}_k^t\Vert_2^2$
    \STATE Update and construct frequency domain input:$\mathcal{F}^{-1}(\mathbf{x}_k^f)= \frac{1}{\sqrt{\alpha_k^t}}\left(\mathbf{x}_k^t - \frac{\beta_k^t}{\sqrt{1-\bar{\alpha}_k^t}}\hat{\boldsymbol{\epsilon}}_k^t\right)$
    \STATE Construct frequency domain input:$\mathbf{h}^f = \texttt{set\_input}(\mathcal{F}^{-1}(\mathbf{x}_k^t),\mathbf{s},k)$
    \STATE Estimate frequency domain noise and calculate frequency noise loss: $\hat{\boldsymbol{\epsilon}}_k^f=\hat{\boldsymbol{\epsilon}}_{\theta}(\mathbf{h}^f,\mathbf{s},k)$, $\mathcal{L}^f = \Vert\mathcal{F}^{-1}(\Lambda\hat{\boldsymbol{\epsilon}}_k^f)-\boldsymbol{\epsilon}_k^f\Vert_2^2$
    \STATE Calculate consistency loss:$\mathcal{L}_{c} = \Vert(\boldsymbol{\epsilon}_k^t+\boldsymbol{\epsilon}_k^f)-(\hat{\boldsymbol{\epsilon}}_k^t+\mathcal{F}^{-1}(\Lambda\hat{\boldsymbol{\epsilon}}_k^f))\Vert_2^2$
    \STATE Take gradient descent step on $\nabla_{\theta}\left[(1-\mathbf{M}^{\text{cond}})\cdot(\mathcal{L}^t+\mathcal{L}^f+\omega\mathcal{L}_c)\right]$
    \ENDFOR
    \RETURN {$\boldsymbol{\epsilon}_{\theta}^t,\boldsymbol{\epsilon}_{\theta}^f$}
  \end{algorithmic}
\end{algorithm}
\begin{algorithm}[htbp]
  \caption{Sampling Procedure of HyFAD}
  \label{alg:inference} 
  \begin{algorithmic}[1]
    \STATE\textbf{Input:} Trained denoising function $\boldsymbol{\epsilon}_{\theta}^t,\boldsymbol{\epsilon}_{\theta}^f$, sampling step $T$, observed data $\mathbf{X}^{\text{obs}}$, condition mask $\mathbf{M}^{\text{cond}}$, side information $\mathbf{s}$,
    time domain scheduler $\beta^t$, frequency domain scheduler $\beta^f$
    \STATE\textbf{Output:} Generated Sequence $\mathbf{x}_0$
    \STATE Initialize by sampling from noise prior: $\mathbf{x}_T=\sqrt{\lambda}\mathbf{z}^t+\sqrt{1-\lambda}\mathcal{F}^{-1}(\Lambda\mathbf{z}^f).\quad \mathbf{z}^t,\mathbf{z}^f\sim\mathcal{N}(0,\mathbf{I})$
    \FOR{$k=T$ \textbf{to} $1$}
    \STATE Construct time-domain input:$\mathbf{h}_k^t \leftarrow \texttt{set\_input}(\mathbf{x}_k^t, \mathbf{X}^{\text{obs}}, \mathbf{M}^{\text{cond}})$
    \STATE Estimate time-domain noise:$\hat{\boldsymbol{\epsilon}}_k^t \leftarrow \boldsymbol{\epsilon}_\theta^t(\mathbf{h}_k^t, \mathbf{s}, k)$
    \STATE Update input according to time-domain noise:$\mathcal{F}^{-1}(\mathbf{x}_k^f) = \frac{1}{\sqrt{\alpha_k^t}}\left(\mathbf{x}_k - \frac{\beta_k^t}{\sqrt{1-\bar{\alpha}_k^t}}\hat{\boldsymbol{\epsilon}}_k^t\right)$
    \STATE Construct frequency-domain input:$\mathbf{h}_k^f \leftarrow \texttt{set\_input}(\mathcal{F}^{-1}(\mathbf{x}_k^f), \mathbf{X}^{\text{obs}}, \mathbf{M}^{\text{cond}})$
    \STATE Estimate frequency-domain noise:$\hat{\boldsymbol{\epsilon}}_k^f \leftarrow \boldsymbol{\epsilon}_\theta^f(\mathbf{h}_k^f, \mathbf{s}, k) $
    \STATE Update input according to frequency domain noise: 
    $\mathbf{x}_{k-1}^t\leftarrow\frac{1}{\sqrt{\alpha_k^f}}\left(\mathcal{F}^{-1}(\mathbf{x}_k^f)-\sqrt{1-\lambda}\mathcal{F}^{-1}(\Lambda\hat{\boldsymbol{\epsilon}}_k^f)\right)$
    \ENDFOR
    \RETURN $\mathbf{x}_0$
  \end{algorithmic}
\end{algorithm}
\subsection{Dataset Details}
\label{sec:dadetails}
The PhysioNet dataset~\citep{silva2012predicting} consists of 4000 irregularly-sampled medical time series data including 35 variables (\eg{Albumin and heart rate}) collected from ICU with a total length of 48 hours and has been made available by their authors under the terms of the Creative Commons Attribution License 3.0 (CCAL). Consistent with previous studies~\citep{DBLP:conf/nips/TashiroSSE21}, the dataset is processed hourly to get 48 timesteps. The processed dataset contains nearly 80\% originally missing values without ground truth. In our experiments, we random select 10/50/90\% of the observed values as the imputation targets~(\ie{ground truth of test dataset}).

The air quality dataset~\citep{zhang2017cautionary} contains PM2.5 data collected from 36 monitor stations in Beijing and is licensed under a Creative Commons Attribution 4.0 International (CC BY 4.0) license.  All the air quality data are collected hourly for 12 months. The original dataset contains 13.3\% missing values with artificial ground-truth with a non-random missing pattern.

The details of the datasets are presented in Tab.\ref{tab:datasetdetails}.
\begin{table}[htbp]
\centering
\caption{Details of PhysioNet and Air Quality datasets.}
\label{tab:datasetdetails}
\resizebox{0.95\linewidth}{!}{
\begin{tabular}{lccccc}
\toprule
\textbf{Dataset}  & \textbf{\# Samples} & \textbf{\# Features} & \textbf{Time Steps} & \textbf{Missing Type}  & \textbf{Missing Ratio}\\ \midrule
PhysioNet         & 4000                & 35                   & 48                  & Originally Missing \& Random & 80\%~(Original)     \\
Air Quality & 5633                & 36                   & 36                  & Non-Random \& Artificial     & 13\%~(Original)     \\ 
\bottomrule
\end{tabular}}
\end{table}
\subsection{Evaluation Metrics}
In this section, we present the details of evaluation metrics in our experiments. $y, \hat{y}\in\mathbb{R}^{K\times L}$ denote the ground truth and output of our model and $M$ is the indicator matrix.

\textbf{Mean Absolute Error~(MAE)} calculates the average $L_1$ error between the imputed samples and the ground truth of the time series:
\begin{equation}
    \mathbf{MAE}(y,\hat{y}) = \frac{\sum_{i=1}^K\sum_{j=1}^LM_{ij}\vert y_{ij}-\hat{y}_{ij}\vert_1}{\sum_{i=1}^K\sum_{j=1}^LM_{ij}}
\end{equation}
MAE reflects the overall deviation across all points, emphasizes the overall accuracy of the model outputs. Compared with RMSE, MAE is more robust to outliers in the data.

\textbf{Rooted Mean Square Error~(RMSE)} calculates the average $L_2$ error between the imputed samples and the ground truth of time series:
\begin{equation}
    \mathbf{RMSE}(y,\hat{y}) = \sqrt{\frac{\sum_{i=1}^K\sum_{j=1}^LM_{ij}\Vert y_{ij}-\hat{y}_{ij}\Vert_2^2}{\sum_{i=1}^K\sum_{j=1}^LM_{ij}}}
\end{equation}
RMSE highlights potential large errors, so a few big deviations may dominate the score. Compared with MAE, RMSE is less robust to outliers but better captures worst-case performance.
\subsection{Limitations}
\label{sec:limitations}
Here we provide an in-depth discussion regarding the limitations of our current work. First, our exploration of frequency-domain representations is primarily confined to the Discrete Fourier Transform~(DFT). Future research could extend this framework to alternative transforms, such as the Discrete Cosine Transform~(DCT) or Discrete Wavelet Transform~(DWT), to better capture different kinds of features from different types of time series data. Second, this study focuses on deterministic time series imputation using DDIM for efficient sampling. A promising future direction involves adapting DDPM-based sampling schemes to our hybrid time-frequency architecture, as well as evaluating the model's performance on probabilistic imputation tasks to better quantify uncertainty. Another limitation is that the frequency-domain sampling process can be less numerically stable than sampling in the time domain, particularly when reconstructing signals from sparse spectral representations. We intend to investigate specialized stabilization methods for frequency-domain diffusion in future works.
\subsection{Implementation Details}
Global factor $s_k$ is updated as $s_k = \alpha s_{k-1} + (1-\alpha)\hat{s}_k$, where $\hat{s}_k$ is computed as the median spectral energy of the current training batch. It depends only on the observed values of the training samples and their diffusion states without any validation or test information. Moreover, this EMA is updated only during training and kept fixed at inference time, so it does not introduce data leakage. 

The noise scheduler does not rely on validation data. The scheduler is predefined, and the noise scale $N_t$ in the frequency branch (Eq. 18) is computed analytically from the diffusion coefficients rather than estimated from validation or test statistics. Therefore, neither the construction nor the use of the scheduler involves validation information.

All baselines are evaluated under the same setting. We strictly follow the CSDI setting for data preprocessing, splitting, and evaluation on both PhysioNet and Air Quality, ensuring a fair comparison across methods. For validation, we follow the same setting as CSDI: the validation set is only used to select the best checkpoint, and we do not use early stopping in our training.
\subsection{Computational Cost}
\label{sec:time}
Compared with CSDI, the extra cost of HyFAD mainly comes from frequency-domain branch, time–frequency transforms and the joint optimization objective. Since the backbone is Transformer-based, the overall complexity is $O(L^2)$ with transform cost $O(L\log L)$. We also provide training and inference speed on PhysioNet (batch size = 16) in Table.\ref{tab:trainingtime}.
\begin{table}[tbp]
\centering
\caption{Training time comparison of CSDI and HyFAD.}
\begin{sc}
\label{tab:trainingtime}
\begin{tabular}{c|cc}
\hline
                 & CSDI   & HyFAD  \\ \hline
training/epoch   & 6.70s  & 18.06s \\
inference/sample & 36.51s & 46.42s \\ \hline
\end{tabular}
\end{sc}
\end{table}
\subsection{Experiment details}
\label{sec:appendetails}
\subsubsection{Hyperparameters}
The hyperparameters in our experiments is detailed in Table.\ref{tab:hyperps}
and the hyperparameters in frequency-aware diffusion embedding in presented in  Table.\ref{tab:hyperfade}
\begin{table}[htbp]
\centering
\caption{Hyperparameter details in our experiments}
\label{tab:hyperps}
\begin{tabular}{@{}lll@{}}
\toprule
Dataset                      & PhysioNet  &  Air Quality        \\ \midrule
Epochs                       & 400      & 400      \\
Batch Size                   & 16       & 16       \\                  Learning Rate                & 0.001    & 0.001 \\
\#Time Layers               & 4        & 4        \\
\#Freq Layers                     & 4        & 4        \\
Channels                     & 128      & 128      \\
\#Time Heads                 & 8        & 8        \\
\#Freq Heads                  & 8        & 8        \\
\#Time Diffusion Embedding Dim & 128      & 128      \\
\#Freq Diffusion Embedding Dim & 128      & 128      \\
$\beta_\mathrm{start}^t$                   & 0.0001 & 0.0001 \\
$\beta_\mathrm{end}^t$                  & 0.5      & 0.75      \\
$\beta_\mathrm{start}^f$                 & 0.0001 & 0.0001 \\
$\beta_\mathrm{end}^f$                   & 0.05     & 0.05     \\
$\omega$            & 0.4      & 1.0      \\
$\lambda$           & 0.75     & 0.75    \\
\#Steps                    & 50       & 50       \\
Noise Scheduler                     & quad     & quad     \\
\#Time Embedding Dim                      & 128      & 128      \\
\#Feature Embedding Dim                   & 16       & 16       \\
\bottomrule
\end{tabular}
\end{table}
\begin{table}[htbp]
\centering
\caption{Hyperparameters in frequency-aware diffusion embedding.}
\label{tab:hyperfade}
\begin{tabular}{llllllll}
\hline
Dataset     & $\gamma$ & $\tau$ & $g_{\min}$ & $\kappa$ & $c_{\min}$ & $c_{\max}$ & $q$ \\ \hline
PhysioNet   & 1.0      & 0.7    & 0.3      & 0.5      & 1.0      & 100.0    & 1.0 \\
Air Quality & 1.0      & 0.7    & 0.3      & 0.5      & 1.0      & 100.0    & 1.0 \\ \hline
\end{tabular}
\end{table}
\subsubsection{Ablation on explicit frequency modeling}
To empirically assess the contribution of explicit frequency modeling, we compare HyFAD with its pure time-domain counterpart. Because the time-domain and frequency-domain diffusion processes in HyFAD are tightly coupled throughout both the forward and reverse diffusion procedures, simply removing the frequency branch from the denoising network would lead to an incomplete and potentially unfair ablation. Instead, we remove the frequency-domain process from both diffusion stages, which reduces the framework to a purely time-domain diffusion model. This counterpart is equivalent to CSDI, an established generative baseline for time-series imputation. Therefore, the comparison between HyFAD and CSDI serves both as a controlled study of the effect of frequency modeling and as a direct comparison with a representative time-domain generative imputation method.

To further examine whether the imputed series preserves periodic structures across different spectral ranges, we adopt Spectral-MAE and partition the spectrum into three sub-intervals and the results are in Tab.\ref{tab:smae-band}. The Spectral-MAE is defined as the normalized PSD estimates of the ground-truth series $P_{\mathrm{GT}}(\omega)$ against those of
the reconstruction $P_{\mathrm{Pred}}(\omega)$:
\begin{equation}
    \mathrm{S}\text{-}\mathrm{MAE} = \frac{1}{\vert\Omega\vert}\sum_{\omega\in\Omega}\lvert\frac{P_{\mathrm{GT}}(\omega)}{\sum_{\omega^\prime}P_{\mathrm{GT}}(\omega^\prime)}-\frac{P_{\mathrm{pred}}(\omega)}{\sum_{\omega^\prime}P_{\mathrm{pred}}(\omega^\prime)}\rvert
\end{equation}
where $\Omega$ is the set of evaluated frequencies. The results show that CSDI exhibits systematic weaknesses in capturing both low- and high-frequency patterns, highlighting the importance of explicit frequency-domain modeling. Since Spectral-MAE can be dominated by low-frequency components, the performance improvement is not significant. We additionally compute the error in the logarithmic spectral domain and report band-wise Log-SMAE. Moreover, we report the Leading Frequency Error (LFE), which evaluates whether the dominant frequency component of the original signal is correctly recovered by the imputed sequence. LFE complements Log-SMAE by focusing on the recovery of the dominant periodic structure: a large LFE indicates that the dominant frequency is not well recovered, whereas Log-SMAE measures the fidelity of the full spectral distribution. The log-SMAE and LFE are formulated as:
\begin{equation}
    \log\mathrm{S}\text{-}\mathrm{MAE} = \frac{1}{\vert\Omega\vert}\sum_{\omega\in\Omega}\lvert\log\frac{P_{\mathrm{GT}}(\omega)}{\sum_{\omega^\prime}P_{\mathrm{GT}}(\omega^\prime)}-\log\frac{P_{\mathrm{pred}}(\omega)}{\sum_{\omega^\prime}P_{\mathrm{pred}}(\omega^\prime)}\rvert
\end{equation}
\begin{equation}
    \mathrm{LFE} = \frac{1}{N}\sum_{i=1}^N\vert f_{\mathrm{GT},i}^{\star}-f_{\mathrm{pred,i}}^{\star}\vert,\qquad f_{i}^{\star} = \frac{1}{2\pi}\mathop{\arg\max}_{\omega\in\Omega}P_i(\omega)
\end{equation}

\begin{table}[tbp]
\centering
\caption{Bandwise S-MAE results of CSDI and HyFAD.}
\label{tab:smae-band}
\begin{sc}
\begin{tabular}{lllllll}
\hline
Missing ratio  & \multicolumn{2}{c}{0.1} & \multicolumn{2}{c}{0.5} & \multicolumn{2}{c}{0.9} \\ \hline
Frequency band & CSDI       & HyFAD      & CSDI       & HyFAD      & CSDI       & HyFAD      \\ \hline
low            & 0.0144     & 0.0143     & 0.0624     & 0.0595     & 0.121      & 0.117      \\
middle         & 0.0131     & 0.0128     & 0.0543     & 0.0522     & 0.0952     & 0.0923     \\
high           & 0.0123     & 0.0120      & 0.0491     & 0.0473     & 0.081      & 0.079      \\ \hline
\end{tabular}
\end{sc}
\end{table}

\begin{table}[tbp]
\caption{Bandwise log-SMAE and LFE results on CSDI and HyFAD, left results: CSDI, right results: HyFAD.}
\label{tab:lsmaelfe}
\begin{sc}
\resizebox{\linewidth}{!}{
\begin{tabular}{ccccccc}
\hline
Missing ratio    & \multicolumn{2}{c}{10\%}      & \multicolumn{2}{c}{50\%}      & \multicolumn{2}{c}{90\%}      \\ \hline
Frequency band   & Log-SMAE      & LFE           & Log-SMAE      & LFE           & Log-SMAE      & LFE           \\ \hline
Low-Frequency    & 0.1046/0.0791 & 4.1e-5/2.3e-5 & 0.3876/0.3113 & 0.0002/0.0002 & 0.6246/0.5569 & 0.0003/0.0003 \\
Middle-Frequency & 0.1271/0.0981 & 0.0012/0.0009 & 0.4531/0.3720 & 0.0048/0.0040 & 0.6827/0.6060 & 0.0074/0.0065 \\
High-Frequency   & 0.1504/0.1184 & 0.0024/0.0019 & 0.4986/0.4084 & 0.0092/0.0074 & 0.7190/0.6701 & 0.0133/0.0120 \\ \hline
\end{tabular}}
\end{sc}
\end{table}
The additional spectral evaluation results provide further empirical evidence for the effect of explicit frequency modeling. As shown in Table.\ref{tab:lsmaelfe}, HyFAD achieves lower Log-SMAE than the pure time-domain counterpart across all three frequency bands and all missing ratios. In the low-frequency band, the relative reductions are 24.4\%, 19.7\%, and 10.8\% under 10\%, 50\%, and 90\% missingness, respectively. In the middle-frequency band, the reductions are 22.8\%, 17.9\%, and 11.2\%, while in the high-frequency band, the reductions are 21.3\%, 18.1\%, and 6.8\%.

HyFAD also consistently reduces LFE in the middle- and high-frequency bands across all missing ratios. Specifically, the relative reductions in the middle-frequency band are 25.0\%, 16.7\%, and 12.2\%, and those in the high-frequency band are 20.8\%, 19.6\%, and 9.8\%. In contrast, the low-frequency LFE values of both methods are already very small, indicating that the dominant low-frequency periodicity can be relatively well recovered even by the pure time-domain baseline. This suggests that the main advantage of HyFAD lies in recovering the more challenging middle- and high-frequency periodic structures.

Overall, these results show that the benefit of explicit frequency modeling is not limited to aggregate improvements in standard time-domain metrics. Instead, HyFAD brings consistent improvements in spectral reconstruction, particularly in the middle- and high-frequency regions where the pure time-domain baseline exhibits larger errors. This further validates the effectiveness of incorporating frequency-domain diffusion for modeling heterogeneous spectral components in time-series imputation.
\subsubsection{Comparison with LSCD and CSDI}
\label{sec:pinterval}
To better demonstrate the superiority of HyFAD over the two next-best baselines, CSDI and LSCD, We present the results of HyFAD, LSCD and CSDI on 5 independent runs and report the corresponding 95\% CI and 
$p$-values in Table.\ref{tab:pinterval}. 

The detailed comparison in Table~\ref{tab:pinterval} further confirms the superiority and stability of HyFAD over the two strongest baselines, LSCD and CSDI. HyFAD achieves the lowest mean RMSE and MAE across all datasets and missingness rates. On PhysioNet, HyFAD consistently outperforms LSCD and CSDI under 10\%, 50\%, and 90\% missingness, with especially clear gains in RMSE. For example, compared with LSCD, HyFAD reduces RMSE by 7.4\%, 5.5\%, and 7.5\% under the three missing ratios, respectively. Similar improvements are observed over CSDI, showing that HyFAD remains robust as the missing ratio increases.

The confidence intervals also indicate that HyFAD maintains stable performance across repeated runs, and its intervals are generally lower than those of LSCD and CSDI. Moreover, all reported p-values are below 0.05, demonstrating that the improvements of HyFAD over both LSCD and CSDI are statistically significant. On the Air Quality dataset, HyFAD also achieves the best RMSE and MAE, although the MAE improvement is relatively smaller. Overall, these results show that HyFAD not only improves average imputation accuracy but also provides statistically reliable gains over competitive diffusion-based baselines.

\begin{table}[tbp]
\centering
\caption{Detailed comparison of HyFAD, LSCD and CSDI.}
\label{tab:pinterval}
\begin{sc}
\resizebox{\linewidth}{!}{
\begin{tabular}{cl|cc|cc|cc|cc}
\hline
\multicolumn{2}{c|}{\multirow{2}{*}{}}           & \multicolumn{2}{c|}{PhysioNet 10\%}     & \multicolumn{2}{c|}{PhysioNet 50\%}     & \multicolumn{2}{c|}{PhysioNet 90\%}     & \multicolumn{2}{c}{Air Quality}           \\ \cline{3-10} 
\multicolumn{2}{c|}{}                            & RMSE               & MAE                & RMSE               & MAE                & RMSE               & MAE                & RMSE                 & MAE                \\ \hline
\multicolumn{1}{c|}{HyFAD} & mean(std)           & 0.423(0.007)       & 0.206(0.001)       & 0.622(0.025)       & 0.288(0.002)       & 0.782(0.028)       & 0.448(0.003)       & 17.730(0.021)        & 9.296(0.007)       \\
\multicolumn{1}{c|}{}      & 95\% CI             & {[}0.414, 0.432{]} & {[}0.206, 0.207{]} & {[}0.591, 0.652{]} & {[}0.286, 0.290{]} & {[}0.746, 0.817{]} & {[}0.444, 0.452{]} & {[}17.705, 17.756{]} & {[}9.286, 9.305{]} \\ \hline
\multicolumn{1}{c|}{LSCD}  & mean(std)           & 0.457(0.004)       & 0.212(0.0004)      & 0.6582(0.0004)     & 0.3046(0.0005)     & 0.845(0.000)       & 0.492(0.000)       & 18.270(0.145)        & 9.340(0.034)       \\
\multicolumn{1}{c|}{}      & 95\% CI             & {[}0.452, 0.462{]} & {[}0.212,0.213{]}  & {[}0.658, 0.659{]} & 0.304, 0.305       & 0.845,0.845        & 0.492,0.492        & 18.125,18.416        & 9.306,9.374        \\
\multicolumn{1}{c|}{}      & p-value(HyFAD-LSCD) & 2.21e-4            & 8.42e-6            & 2.92e-2            & 1.65e-5            & 7.68e-3            & 9.14e-6            & 8.25e-4              & 2.99e-2            \\ \hline
\multicolumn{1}{c|}{CSDI}  & mean(std)           & 0.491(0.004)       & 0.215(0.0005)      & 0.673(0.001)       & 0.307(0.001)       & 0.851(0.000)       & 0.492(0.0005)      & 18.713(0.080)        & 9.347(0.029)       \\
\multicolumn{1}{c|}{}      & 95\% CI             & {[}0.486,0.497{]}  & {[}0.215, 0.216{]} & {[}0.672, 0.675{]} & {[}0.306, 0.307{]} & {[}0.851, 0.851{]} & {[}0.491,0.492{]}  & {[}18.614, 18.812{]} & {[}9.311, 9.382{]} \\
\multicolumn{1}{c|}{}      & p-value(HyFAD-CSDI) & 4.5e-5             & 1.94e-5            & 9.95e-3            & 3.65e-5            & 5.56e-3            & 9.62e-6            & 6.10e-6              & 8.17e-3            \\ \hline
\end{tabular}
}
\end{sc}
\end{table}

\subsubsection{Visualization Results}
\label{sec:appvr}
The imputation results on Air Quality dataset and PhysioNet dataset with missing ratio 10\%, 50\% and 90\% are presented in Fig.\ref{fig:aqi},\ref{fig:phy10},\ref{fig:phy50},\ref{fig:phy90}.  
\begin{figure*}[htbp]
    \centering
    \includegraphics[width=0.9\linewidth]{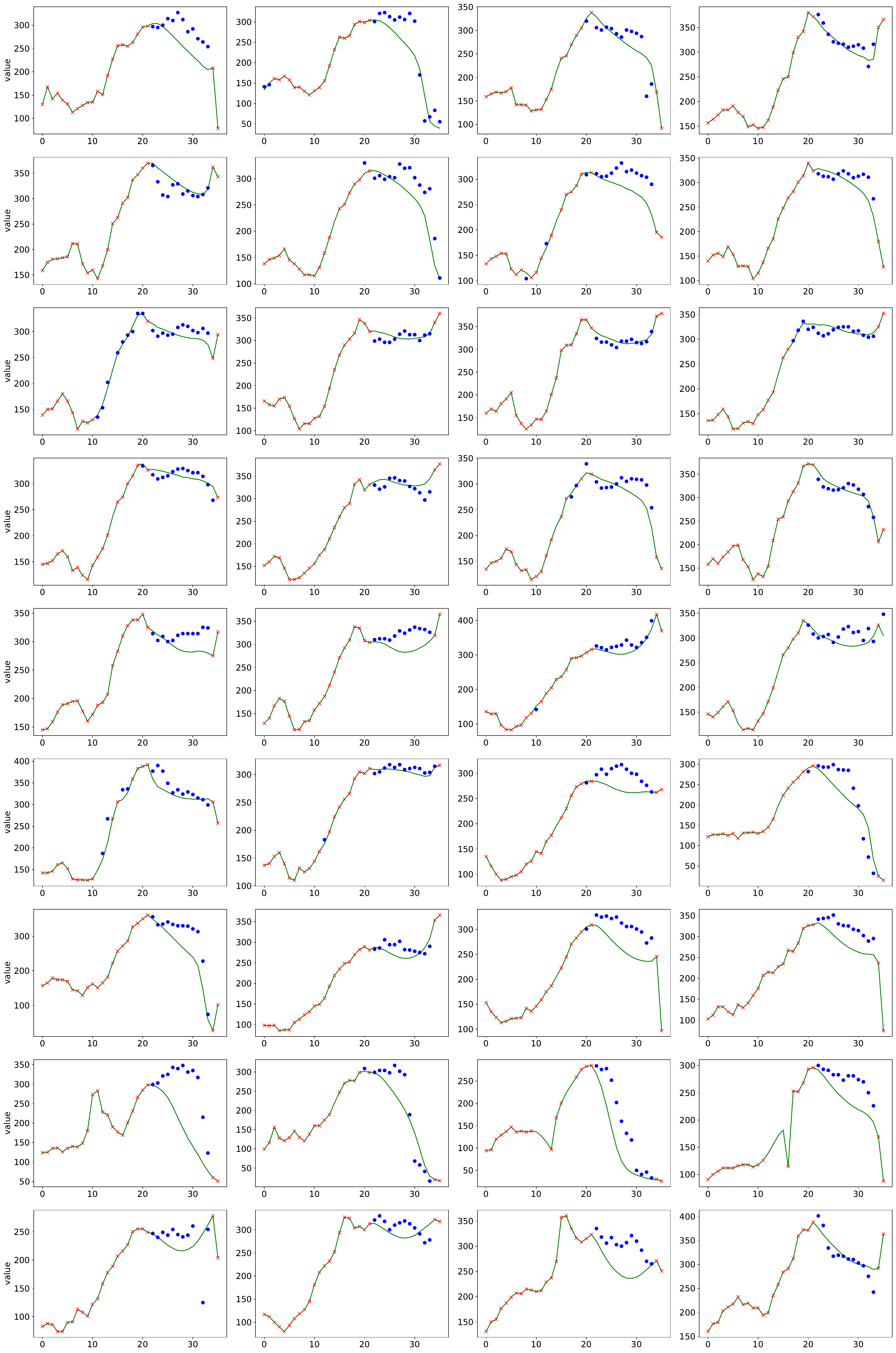}
    \caption{Visualization of imputation results on AQI dataset from Channel 1 to Channel 36. The solid line represents the imputation results, the blue dots represent the ground truth of the missing points, and the red crosses represent the observed values.}
    \label{fig:aqi}
\end{figure*}
\begin{figure*}[htbp]
    \centering
    \includegraphics[width=0.9\linewidth]{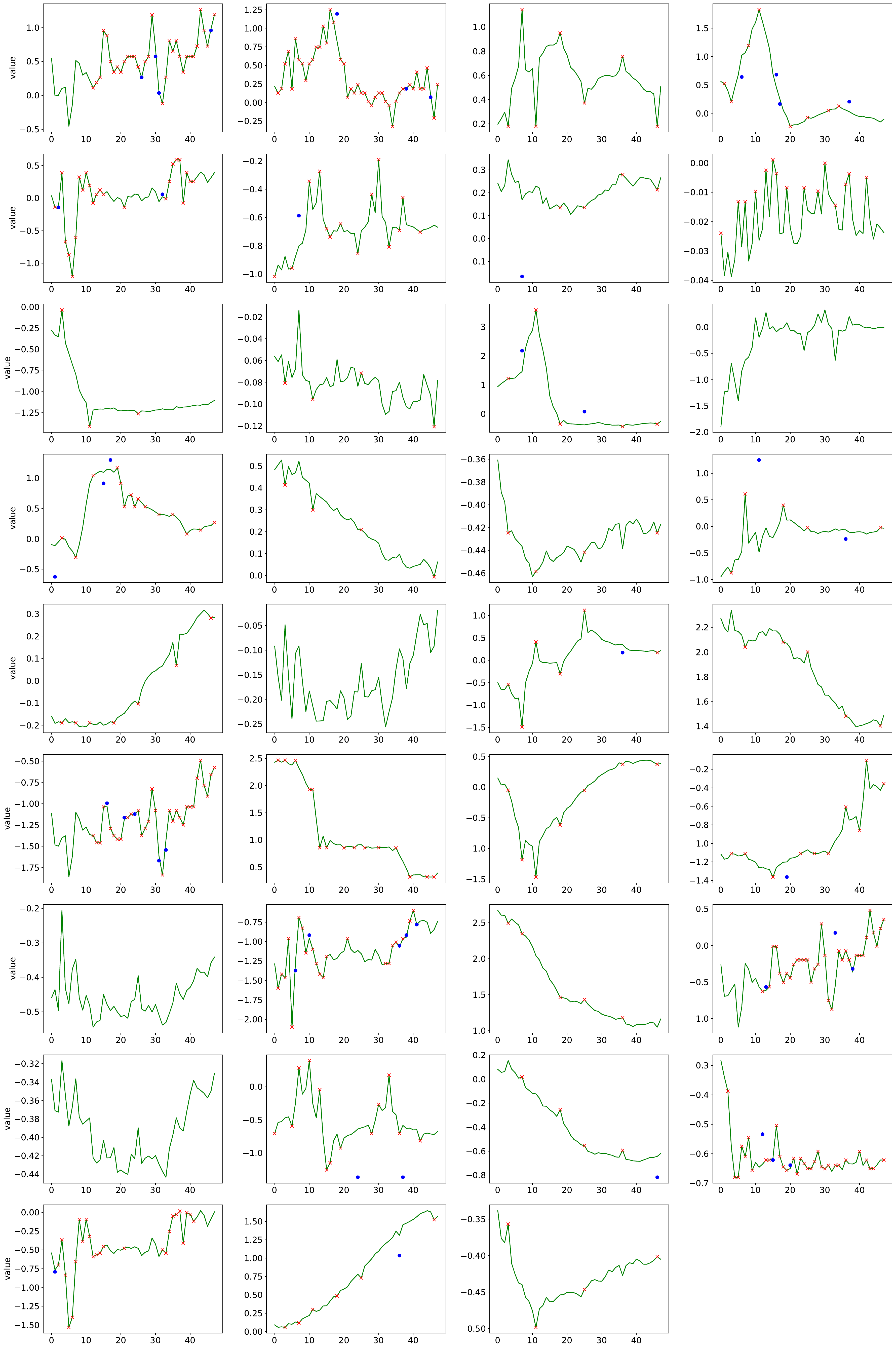}
    \caption{Visualization of imputation results on PhysioNet dataset from Channel 1 to Channel 35 with 10\% missing. The solid line represents the imputation results, the blue dots represent the ground truth of the missing points, and the red crosses represent the observed values.}
    \label{fig:phy10}
\end{figure*}
\begin{figure*}[htbp]
    \centering
    \includegraphics[width=0.9\linewidth]{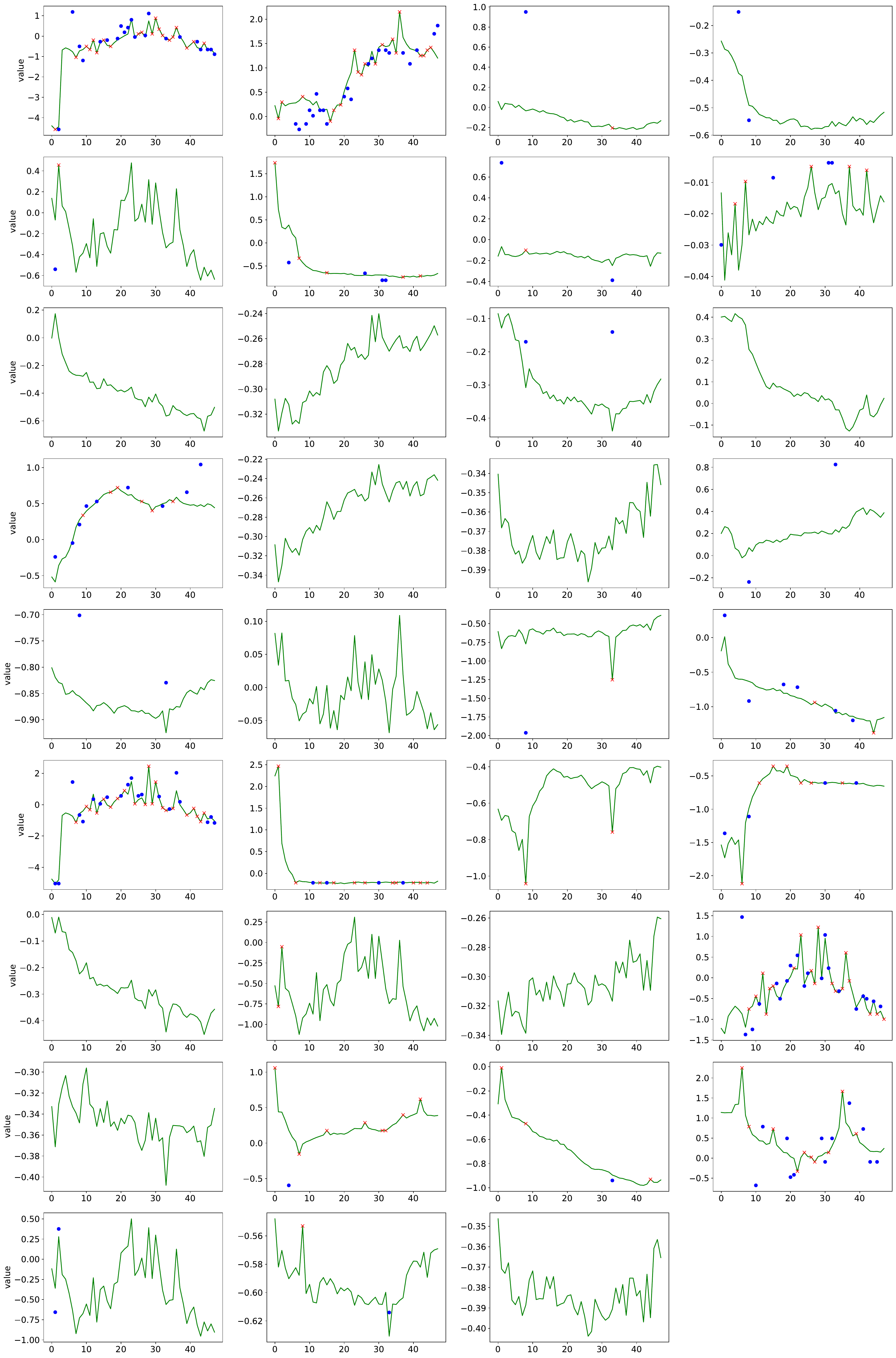}
    \caption{Visualization of imputation results on PhysioNet dataset from Channel 1 to Channel 35 with 50\% missing. The solid line represents the imputation results, the blue dots represent the ground truth of the missing points, and the red crosses represent the observed values.}
    \label{fig:phy50}
\end{figure*}
\begin{figure*}[htbp]
    \centering
    \includegraphics[width=0.9\linewidth]{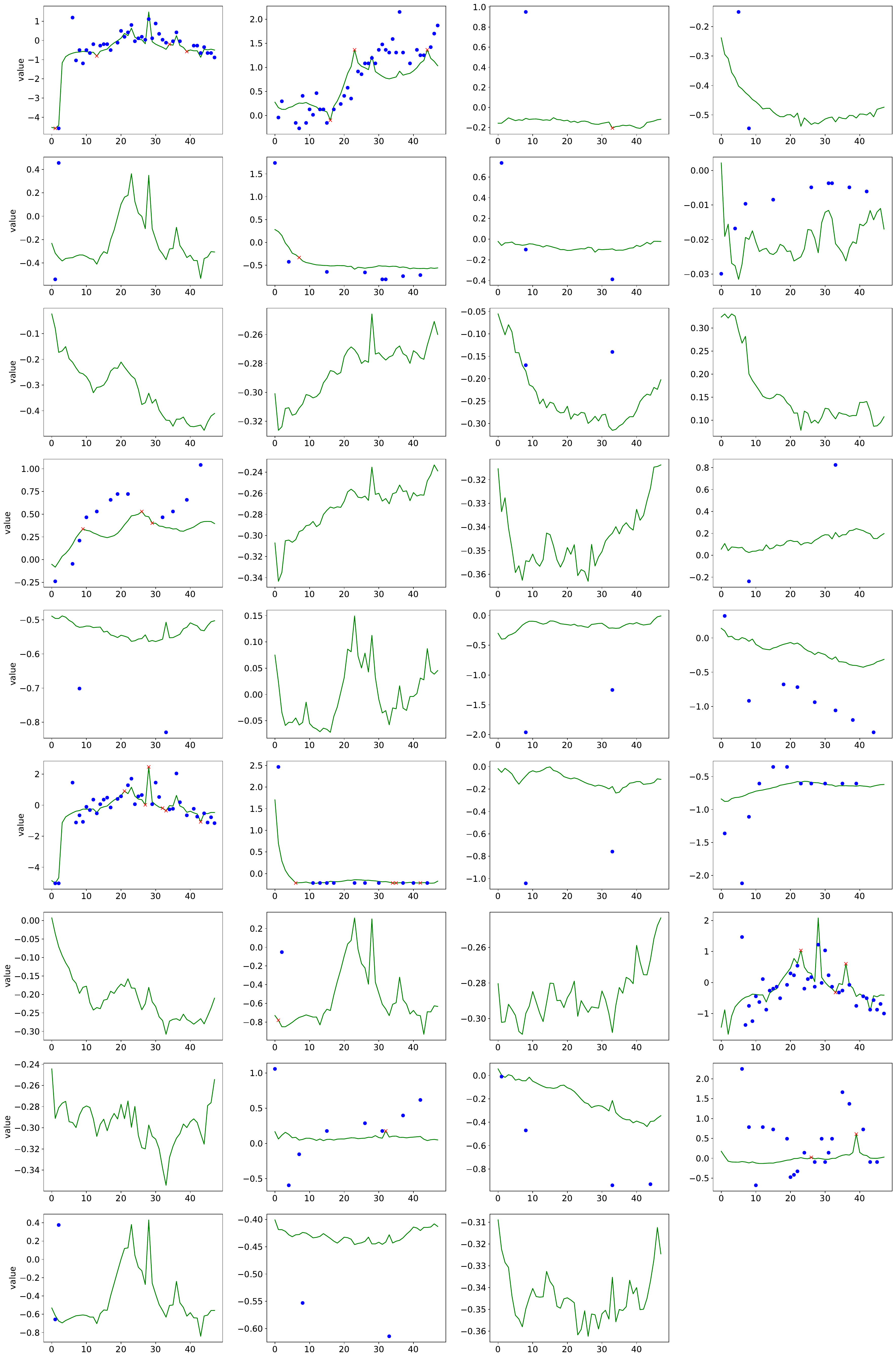}
    \caption{Visualization of imputation results on PhysioNet dataset from Channel 1 to Channel 35 with 90\% missing. The solid line represents the imputation results, the blue dots represent the ground truth of the missing points, and the red crosses represent the observed values.}
    \label{fig:phy90}
\end{figure*}
\subsection{Social impacts}
\label{sec:socimp}
This work may have positive societal impacts by improving time series imputation in domains such as healthcare monitoring, air-quality assessment, industrial sensing, and scientific data analysis, where missing observations are common and better reconstruction can support more reliable downstream analysis. However, the method may also pose risks if imputed values are treated as ground-truth measurements, especially in high-stakes settings such as clinical decision-making, public safety, or critical infrastructure. Incorrect or biased imputations could lead to misleading conclusions or harmful decisions, particularly for underrepresented populations, regions, or sensor conditions. Therefore, imputed values should be clearly distinguished from observed data, and deployment in sensitive domains should include domain-specific validation, uncertainty assessment, privacy protection, and expert review.



\end{document}